\documentclass{article}
\usepackage[utf8]{inputenc}
\usepackage[T1]{fontenc}
\usepackage{amsmath, amssymb, amsthm}
\newtheorem{remark}{Remark}
\usepackage{thmtools}
\usepackage{algorithm}
\usepackage{algpseudocode}
\usepackage{mathtools}
\usepackage{hyperref}
\usepackage{xcolor}
\definecolor{firebrick}{RGB}{178,34,34}
\usepackage{adjustbox}
\usepackage{newunicodechar}
\newunicodechar{─}{-}
\usepackage{newunicodechar}
\newunicodechar{═}{=}
\usepackage{booktabs}
\usepackage{enumitem}
\usepackage{tikz}
\usetikzlibrary{arrows.meta, positioning, shapes.geometric, calc}
\usepackage{mathpazo}
\usepackage{booktabs}
\usepackage{multirow}
\usepackage{makecell}
\usepackage{xcolor}
\usepackage{colortbl}
\usepackage{array}

\usepackage{thm-restate}
\usepackage[most]{tcolorbox}
\tcbset{
    boxsep=0mm,
    boxrule=0pt,
    colframe=white,
    arc=0mm,
    left=0.5mm,
    right=0.5mm
}

\definecolor{neural}{RGB}{220, 80, 60}
\definecolor{symbolic}{RGB}{50, 120, 200}
\definecolor{compose}{RGB}{40, 160, 80}
\definecolor{lightgray}{RGB}{245, 245, 245}
\definecolor{weak}{RGB}{255, 237, 215}
\definecolor{mid}{RGB}{215, 232, 255}
\definecolor{strong}{RGB}{215, 250, 222}
\definecolor{best}{RGB}{180, 230, 180}
\definecolor{rlmbest}{RGB}{200, 215, 255}
 
\newcommand{\lwin}[1]{\cellcolor{best}\textbf{#1}}
\newcommand{\rwin}[1]{\cellcolor{rlmbest}\textbf{#1}}
\newcommand{\up}[1]{\textcolor{green!50!black}{\textbf{#1}}}

\newtheorem{theorem}{Theorem}

\newtheorem{corollary}[theorem]{Corollary}
\newtheorem{definition}{Definition}

\newcommand{\lRLM}{\lambda\text{-}\textsf{RLM}}
\newcommand{\lRLMs}{\lambda\text{-}\textsf{RLMs}}
\newcommand{\LLM}{\mathcal{M}}
\newcommand{\REPL}{\mathcal{E}}
\newcommand{\cost}{\mathcal{C}}
\newcommand{\accuracy}{\mathcal{A}}

\newcommand{\vocab}{\Sigma}

\newcommand{\Map}{\textsc{Map}}
\newcommand{\Filter}{\textsc{Filter}}
\newcommand{\Reduce}{\textsc{Reduce}}
\newcommand{\Split}{\textsc{Split}}
\newcommand{\Peek}{\textsc{Peek}}
\newcommand{\Concat}{\textsc{Concat}}
\newcommand{\Cross}{\textsc{Cross}}

\definecolor{weak}{RGB}{255, 235, 210}
\definecolor{mid}{RGB}{210, 230, 255}
\definecolor{strong}{RGB}{210, 255, 220}

\algrenewcommand\algorithmicrequire{\textbf{Input:}}
\algrenewcommand\algorithmicensure{\textbf{Output:}}

\title{\textbf{$\lambda$-RLM: Lambda-Calculus Recursive Language Models} \\[0.5em]
\large End-to-End Algorithm Specification}
\author{}
\date{}

\usepackage[table]{xcolor}
\usepackage[main, final]{neurips_2025}
\newcommand{\pykw}[1]{\textcolor{compose}{\texttt{#1}}}
\newcommand{\pyfn}[1]{\textcolor{blue}{\texttt{#1}}}

\usepackage[utf8]{inputenc} %
\usepackage[T1]{fontenc}    %
\usepackage{hyperref}       %
\usepackage{url}            %
\usepackage{booktabs}       %
\usepackage{amsfonts}       %
\usepackage{nicefrac}       %
\usepackage{microtype}      %
\usepackage{xcolor}

\usepackage{algorithm}
\usepackage{algpseudocode}

\usepackage{graphicx} %
\usepackage{amsfonts} %
\usepackage{amssymb} %
\usepackage{amsthm}
\usepackage{amsmath} %
\newtheorem{assumption}{Assumption}
\usepackage{hyperref}

\title{The $\mathbf{Y}$-Combinator for LLMs: \\ Solving Long-Context Rot with $\lambda$-Calculus}

\author{%
\hspace{1mm}Amartya Roy\\
	The School of Interdisciplinary Research \\
	Indian Institute of Technology (IIT) Delhi\\
    Robert Bosch GmbH, India\\
	\texttt{srz248670@iitd.ac.in} \\
  \And
  Rasul Tutunov \\
  Huawei Noah's Ark Lab\\
  \texttt{rasul.tutunov@huawei.com}
  \And
  Xiaotong Ji \\ 
  Huawei Noah's Ark Lab \\ 
  \texttt{xiaotong.ji1@h-partners.com} \\
  \And 
  Matthieu Zimmer \\
  Huawei Noah's Ark Lab \\
  \texttt{matthieu.zimmer@huawei.com} 
  \And 
  Haitham Bou-Ammar \\
  Huawei Noah's Ark \\ 
  UCL Centre for Artificial Intelligence\\
  \texttt{haitham.ammar@huawei.com}
}

\begin{document}

\maketitle

\begin{abstract}
LLMs are increasingly used as general-purpose reasoners, but long inputs remain bottlenecked by a fixed context window. Recursive Language Models (RLMs) address this by externalising the prompt and recursively solving subproblems. Yet existing RLMs depend on an open-ended read–eval–print loop (REPL) in which the model generates arbitrary control code, making execution difficult to verify, predict, and analyse.

We introduce $\lRLM$, a framework for long-context reasoning that replaces free-form recursive code generation with a typed functional runtime grounded in $\lambda$-calculus. It executes a compact library of pre-verified combinators and uses neural inference only on bounded leaf subproblems, turning recursive reasoning into a structured functional program with explicit control flow.
We show that $\lRLM$ admits formal guarantees absent from standard RLMs, including termination, closed-form cost bounds, controlled accuracy scaling with recursion depth, and an optimal partition rule under a simple cost model. Empirically, across four long-context reasoning tasks and nine base models, $\lRLM$ outperforms standard RLM in 29 of 36 model-task comparisons, improves average accuracy by up to $\textbf{+21.9}$ points across model tiers, and reduces latency by up to $\textbf{4.1}\times$. These results show that typed symbolic control yields a more reliable and efficient foundation for long-context reasoning than open-ended recursive code generation. The complete implementation of $\lRLMs$, is open-sourced for the community at: \href{https://github.com/lambda-calculus-LLM/lambda-RLM}{\texttt{github.com/lambda-calculus-LLM/lambda-RLM}}.
\end{abstract}

\section{Introduction}
Large language models (LLMs) are increasingly used as general-purpose problem solvers \citep{brown2020languagemodelsfewshotlearners, yao2023treethoughtsdeliberateproblem,mower2024rosllmrosframeworkembodied,zimmer2025bourbakiselfgeneratedgoalconditionedmdps, ji2026scalablepowersamplingunlocking}, yet one of their most fundamental bottlenecks remains unchanged: \emph{a Transformer consumes a fixed-length context window} \citep{dai2019transformerxlattentivelanguagemodels}. When inputs exceed this limit, e.g., long documents, codebases, multi-file repositories, or large collections of evidence, naïvely truncating context or relying on sliding-window prompting forces the model to “forget” early information and often breaks tasks that require global consistency or systematic evidence gathering \citep{liu2023lostmiddlelanguagemodels, wang2024limitssurveytechniquesextend}. In response, a growing line of work reframes long-context reasoning as inference-time scaling: rather than increasing model parameters or training new architectures, we can scale computation at inference by decomposing problems into smaller subproblems and composing their solutions \citep{zhou2023leasttomostpromptingenablescomplex, yao2023reactsynergizingreasoningacting, yao2023treethoughtsdeliberateproblem, yang2025testtimescalingllmssurvey}.

\begin{figure}[t!]
    \centering
    \includegraphics[trim={3em 7em 3em 8em}, clip, width=\linewidth]{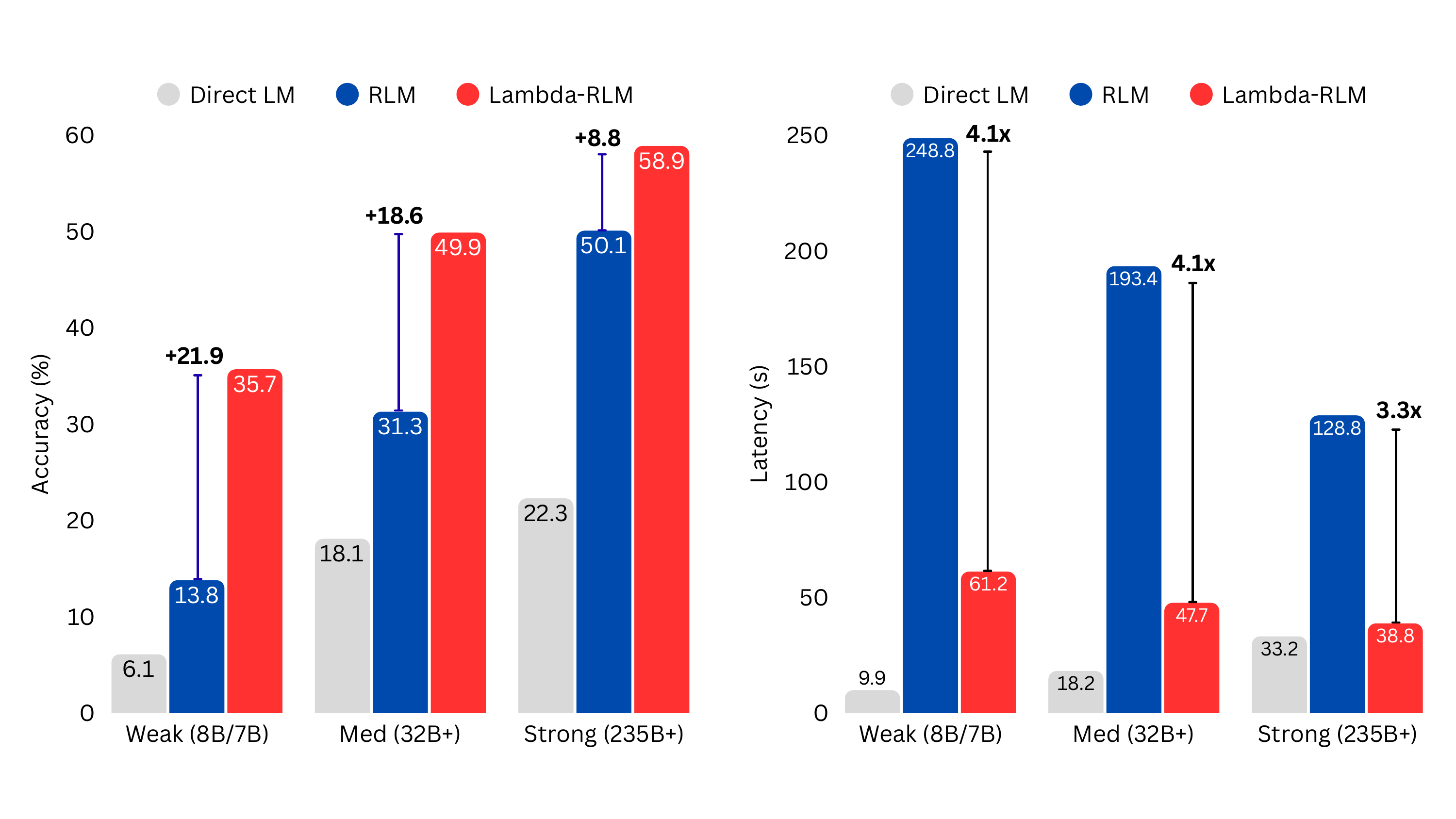}
    \caption{A summary of our results comparing $\lRLMs$ to base LLMs and recursive LLMs. Those results demonstrate improvements reaching $\textbf{+21.9}$ in accuracy, with $\textbf{4.1}\times $ in latency reductions.}
    \label{fig:placeholder}
\end{figure}
A particularly compelling recent proposal is Recursive Language Models (RLMs), which argues that arbitrarily long user prompts should not be fed into the neural network directly \citep{zhang2026recursivelanguagemodels}. Instead, the prompt should be treated as part of an external environment that the model can interact with symbolically. Concretely, RLM initialises a programming environment (a REPL) in which the prompt is stored as a variable; the LLM then writes code to peek into the prompt, decompose it into slices, and recursively invoke itself on those slices as needed. This simple interface, prompt-as-environment plus symbolic recursion, enables models to handle inputs far beyond their native context length while retaining a standard “string-in, string-out” API.

However, RLM’s power comes with a practical cost: it relies on an LLM-driven control loop that emits and executes arbitrary code until the model decides it has finished. This open-ended REPL loop is difficult to bound and audit. In practice, it creates several failure modes that are orthogonal to the underlying reasoning task: code may not parse or may crash at runtime; recursion may be invoked excessively; intermediate outputs may be malformed; and computation may become unpredictable due to the model’s own control-flow decisions. More broadly, giving an LLM unrestricted freedom to program its own execution introduces an undesirable coupling between what the model knows and how it is allowed to search and compose evidence.

In this work, we propose $\lRLM$, a framework that retains the key insight of RLM, prompt-as-environment with recursive decomposition but replaces open-ended code generation with a typed, functional runtime grounded in $\lambda$-Calculus. $\lRLM$ expresses all control flow through a small library of deterministic, compositional operators (e.g., $\Split$, $\Map$, $\Filter$, $\Reduce$) that are pre-verified and loaded into the REPL before execution. The base language model $\LLM$ is invoked only at the leaves of the recursion, on sub-prompts that are guaranteed to fit within its context window $K$; all higher-level decisions i) how to split, ii) how many chunks, iii) when to stop, iv)how to compose are made by a planner and executed symbolically, without any LLM-generated code. Recursion is encoded as a fixed-point over this operator library (Section~\ref{sec:framework}), and the planner enforces predictable execution: maximum depth $d = \lceil \log_{k^*}(n/\tau^*) \rceil$, a pre-computed number of $\LLM$ calls, and deterministic composition at every level. As a result, $\lRLM$ separates \emph{semantic reasoning} from \emph{structural control}: the model contributes understanding only where it is needed; at leaf sub-problems small enough to process reliably while all orchestration is handled by an auditable, deterministic controller with formal guarantees on termination, cost, and accuracy.

We choose $\lambda$-Calculus as our foundation because it provides a minimalist yet universal interface for hierarchical reasoning that other formalisms lack. While Finite State Machines (FSMs) are insufficient for the arbitrary recursion depths required in complex document decomposition, and Planning Domain Definition Languages (PDDL) are optimised for state-space search rather than data transformation, $\lambda$-Calculus treats the prompt as a first-class functional object. Crucially, by utilising fixed-point combinators (e.g., the $Y$-combinator), $\lambda$-RLM "ties the knot" of recursion without requiring the LLM to manage function names or global state, effectively eliminating the reference errors and non-termination failures common in open-ended REPL loops.

Our design choices yield three benefits. First, $\lRLM$ provides termination by construction under mild conditions on the splitting operator, eliminating a common class of non-termination and runaway-execution failures in agentic scaffolds. Second, it yields predictable computation: we can bound the number of oracle calls and the total work as a function of input size and the chosen decomposition policy. Third, it improves reliability by reducing the number of “critical decisions” delegated to the language model. We evaluate $\lRLM$ on long-context task settings, using RLM as a primary baseline. In short, our contributions can be summarised as: \textit{i)} We introduce $\lRLM$, a typed functional runtime for prompt-as-environment long-context reasoning with recursion expressed as a fixed-point over deterministic combinators; \textit{ii)} We formalise an operational semantics and prove termination and cost bounds under standard size-decreasing decomposition assumptions; and \textit{iii)} We empirically compare against RLM, demonstrating improved reliability and more predictable compute while improving task performance.

We validate $\lRLM$ on four long-context task families spanning search, aggregation, pairwise reasoning, and code understanding, across nine base models and context lengths up to 128K. Compared with normal RLM, $\lRLM$ wins in 29/36 model-task comparisons ($\textbf{81}\%$ overall), improves average accuracy by up to $\textbf{+21.9}$ points on weak models and $\textbf{+18.6}$ points on medium models, and delivers consistent latency reductions of $\textbf{3.3}\times$ to $\textbf{4.1}\times$. On the most structurally demanding benchmark, OOL-Pairs, the gain reaches $\textbf{+28.6}$ points with a $\textbf{6.2}\times$ speedup. These results show that constraining control flow to a typed combinator runtime not only improves predictability but also leads to substantial empirical gains over open-ended recursive code generation.
\section{A Short Primer on $\lambda$-Calculus}\label{Sec:Primal}
The lambda calculus is a minimal formal language for describing computation using only \emph{functions and functional operations}. We include a brief primer here because $\lRLM$ uses a functional view of control flow: recursion and composition are expressed as combinations of small operators, rather than as an LLM-driven loop that generates arbitrary code.

 We use $\texttt{Exp}$ to denote the set of (untyped) lambda-calculus expressions, and $\texttt{Var}$ to denote a countable set of variable names. The grammar is defined by: 
\begin{equation*}
    \texttt{Exp} ::= x \ | \ (\lambda x . \  \texttt{Exp}) \ | \ (\texttt{Exp} \ \texttt{Exp}), \ \ \ x \in \texttt{Var}. 
\end{equation*}
Intuitively, the above is saying that every expression is one of three forms: \textit{i)} A variable which is a placeholder: it gains meaning when it is bound by a function or substituted during evaluation; \textit{ii)} An abstraction (function definition): If $e \in \texttt{Exp}$ and $x \in \texttt{Var}$, then $\lambda x. \ e \in \texttt{Exp}$. This is to be read as: `` a function that takes an argument $x \in \texttt{Var}$ and returns $e \in \texttt{Exp}$. For instance, we could define an identity function as: $\texttt{id} = \lambda x. \ x$, or a constant function that returns its first argument as: $\texttt{const} = \lambda x. \ \lambda y. \ x.$; and \textit{iii)} An Application (functional call): If $e_1, e_2 \in \texttt{Exp}$, then $(e_1, e_2) \in \texttt{Exp}$, which is to be read as ``apply $e_1$ to $e_2$''. For example, the abstraction $(\lambda x. x) \ y$ applies the identity function to $y$. By convention, application associates to the left: $ e_1 \ e_2 \ e_3 \equiv \ (e_1 \ e_2) \ e_3$. In other words, $f \ a \ b$ means ``first apply $f$ to $a$, then apply the result to $b$''. We may omit the outer parentheses when unambiguous.  

Syntax tells us what expressions look like. Evaluation tells us how expressions compute. In the untyped lambda calculus, the central computational rule is $\beta$-reduction, which formalises what it means to apply a function to an argument. If we have a function $\lambda x. \ e$ and we apply it to an argument $a$, we reduce by substituting the argument $a$ for the variable $x$ inside the body $e$: 
\begin{equation*}
    (\lambda x.\ e)\ a \xrightarrow{\beta} e[x := a], \ \ \text{where $e[x := a]$ takes $e$ and replaces every free occurrence of $x$ by $a$. }
\end{equation*}
In other words, $\beta$-reduction is just a function application as substitution, exactly like evaluating a function call. Let us develop some examples to understand $\beta$-reduction better: 

\begin{tcolorbox}[colback=lightgray, colframe=blue!50, title=\textbf{Examples of $\beta$-Reduction}]
\small
\textbf{Identity.} Let $\texttt{id} = \lambda x.\, x$. Then $(\lambda x.\, x)\, y \xrightarrow{\beta} y$: applying the identity returns its input.

\textbf{Constant function.} Let $\texttt{const} = \lambda x.\, \lambda y.\, x$. By left associativity, $\texttt{const}\; a\; b = ((\lambda x.\, \lambda y.\, x)\; a)\; b$. Reducing step by step:
\[
    \underbrace{(\lambda x.\, \lambda y.\, x)\; a}_{\xrightarrow{\beta}\; \lambda y.\, a}
    \; b
    \;\xrightarrow{\beta}\; a.
\]
The outer $\lambda$ binds $a$, yielding a constant function $\lambda y.\, a$ that ignores its argument. Applying it to $b$ returns $a$.
\end{tcolorbox}

\paragraph{Recursion \& Fixed-Point Combinators.}
$\lambda$-Calculus functions are anonymous, so recursion is not built-in. In Python one writes {\pykw{def} \pyfn{f}\texttt{(...): ... }\pyfn{f}\texttt{(...) ...}} the name {\pyfn{f}} enables self-reference. Without names, the trick is \emph{fixed points}: a value $u$ satisfying $u = g(u)$ for a given function $g$.

A \emph{fixed-point combinator} $\texttt{fix}$ is a higher-order term satisfying $\texttt{fix}(g) = g(\texttt{fix}(g))$ for all $g$. Intuitively, $g$ is a non-recursive \emph{recipe} that says: ``here is one step of the computation, assuming you already have a solver $f$ for strictly smaller sub-problems.'' The combinator $\texttt{fix}$ \emph{ties the knot}, converting this one-step recipe into a genuinely recursive function by ensuring $f = g(f)$.

In the untyped $\lambda$-Calculus, one concrete realisation is the Y-combinator $\mathbf{Y}$, satisfying $\mathbf{Y}\, g \xrightarrow{\beta} g(\mathbf{Y}\, g)$-a fixed point of $g$ without any external naming mechanism.
\begin{definition}[Fixed-Point Combinator]
The Y-combinator enables recursion in the untyped lambda calculus: $
    \mathbf{Y} \equiv \lambda f.\, (\lambda x.\, f\, (x\, x))\, (\lambda x.\, f\, (x\, x))$, satisfying $\mathbf{Y}\, g = g\, (\mathbf{Y}\, g)$ for all $g$.
\end{definition}
\begin{tcolorbox}[colback=lightgray, colframe=blue!50, title=\textbf{Worked Example: Factorial via the Y-Combinator}]
\small
In Python, factorial calls itself by name: {\pykw{def} \pyfn{fact}\texttt{(n): }\pykw{return} \texttt{1 }\pykw{if} \texttt{n==0 }\pykw{else} \texttt{n*}\pyfn{fact}\texttt{(n-1)}}. In $\\lambda$-Calculus there are no names, so we separate the \emph{one-step recipe} from the recursion mechanism.

\textbf{Step 1: Write the recipe.} Define a functional $G$ that takes a candidate solver $f$ and returns a one-step factorial procedure:
\begin{equation*}
    G \;\triangleq\; \lambda f.\, \lambda n.\,
    \mathbf{if}\; (n = 0)\; \mathbf{then}\; 1\; \mathbf{else}\; n \cdot f(n-1).
\end{equation*}
$G$ is \emph{not} recursive - it never calls itself. It says: ``given a solver $f$ for smaller inputs, here is one step.''

\textbf{Step 2: Apply the Y-combinator.} Recall $\mathbf{Y} = \lambda g.\, (\lambda x.\, g\,(x\,x))\,(\lambda x.\, g\,(x\,x))$. Define $\texttt{fact} \triangleq \mathbf{Y}\, G$ and expand:
\begin{align*}
    \texttt{fact}
    &= \mathbf{Y}\, G
    = \bigl(\lambda g.\, (\lambda x.\, g\,(x\,x))\,(\lambda x.\, g\,(x\,x))\bigr)\, G \\
    &\xrightarrow{\beta}\;
    (\lambda x.\, G\,(x\,x))\,(\lambda x.\, G\,(x\,x))
    \tag{substitute $g := G$} \\
    &\xrightarrow{\beta}\;
    G\,\bigl(\underbrace{(\lambda x.\, G\,(x\,x))\,(\lambda x.\, G\,(x\,x))}_{ = \,\mathbf{Y}\, G \,=\, \texttt{fact}}\bigr)
    \;=\; G(\texttt{fact}).
    \tag{the knot is tied}
\end{align*}
The self-referential term $(x\,x)$ is the engine: each copy of $\lambda x.\, G(x\,x)$ feeds \emph{itself} as the argument, producing $G(\texttt{fact})$ - exactly the identity $\mathbf{Y}\, G = G(\mathbf{Y}\, G)$.

\textbf{Step 3: Verify.} Expanding $G(\texttt{fact})$ recovers the familiar recursive definition:
\begin{equation*}
    \texttt{fact}
    \;=\; G(\texttt{fact})
    \;\xrightarrow{\beta}\;
    \lambda n.\,
    \mathbf{if}\; (n = 0)\; \mathbf{then}\; 1\; \mathbf{else}\; n \cdot \texttt{fact}(n-1).
\end{equation*}

\textbf{Step 4: Trace $\texttt{fact}(3)$.} Each recursive call re-triggers the same $\mathbf{Y}$ machinery:
\begin{align*}
    \texttt{fact}(3) &\;\xrightarrow{\beta}\; 3 \cdot \texttt{fact}(2) 
    \;\xrightarrow{\beta}\; 3 \cdot 2 \cdot \texttt{fact}(1) 
    \;\xrightarrow{\beta}\; 3 \cdot 2 \cdot 1 \cdot \texttt{fact}(0) 
    \;\xrightarrow{\beta}\; 3 \cdot 2 \cdot 1 \cdot 1 = 6.
\end{align*}
\end{tcolorbox}

\subsection{Core Definitions for $\lRLM$}
In addition to what we presented above, this section also introduces additional definitions needed for the remainder of the paper. Namely, we introduce base language models, cost functions for invoking a base model, and accuracy decays for those models as a function of the prompt's length.  
\begin{definition}[Base Language Model]
A base language model is a function $\LLM : \vocab^* \to \vocab^*$ with context window $K \in \mathbb{N}$, such that $\LLM$ is only defined (or reliable) on inputs of length $|P| \leq K$.
\end{definition}

\begin{definition}[Cost Function]
The cost of invoking $\LLM$ on $n$ tokens:
\begin{equation}
    \cost(n) = c_{\text{in}} \cdot n + c_{\text{out}} \cdot \bar{n}_{\text{out}}
\end{equation}
where $c_{\text{in}}, c_{\text{out}}$ are per-token prices and $\bar{n}_{\text{out}}$ is expected output length.
\end{definition}

\begin{definition}[Accuracy Decay]
The accuracy of $\LLM$ on a prompt of length $n$:
\begin{equation}
    \accuracy(n) = \accuracy_0 \cdot \rho^{\,n/K}, \quad \rho \in (0, 1]
\end{equation}
where $\accuracy_0$ is peak accuracy and $\rho$ is the context-rot decay factor.
\end{definition}

\begin{definition}[Composition Operator]
A composition operator $\oplus : \vocab^* \times \vocab^* \to \vocab^*$ is a deterministic function that combines partial results. We define a family $\{\oplus_\tau\}_{\tau \in \mathcal{T}}$ indexed by task type $\tau$.
\end{definition}

\section{The $\lRLM$ Framework}
\label{sec:framework}
The key idea behind $\lambda$-RLM is simple: long-context reasoning should be recursive, but the recursion should be executed by a small trusted runtime rather than by arbitrary code written by the language model.

$\lambda$-RLM keeps the central insight of RLM—prompt-as-environment with recursive decomposition, but replaces open-ended code generation with a typed functional runtime. Instead of allowing the model to emit arbitrary programs, $\lambda$-RLM executes a fixed library of pre-verified combinators such as $\Split$, $\Map$, $\Filter$, and $\Reduce$. The base language model is used only as a bounded oracle on small leaf subproblems. In this way, $\lambda$-RLM separates reasoning content, which remains neural, from control flow, which becomes symbolic, deterministic, and auditable.

This design is appealing for three reasons. First, it makes execution more reliable by removing many failure modes associated with free-form code generation. Second, it makes computation predictable: once a decomposition strategy is chosen, the number of recursive calls is bounded in advance. Third, it makes the system easier to analyse formally, since recursion is expressed through a fixed functional structure rather than an open-ended loop.
\subsection{From Open-Ended Control to a Restricted Runtime}
Standard RLM operates through a REPL-style interaction. At each iteration, the model generates a code snippet from the conversation 
history, the REPL executes it and returns both an updated state and a 
standard-output string, and the output is appended to the history so the 
model can condition on it in the next turn:
\begin{equation}\label{eq:rlm_loop}
    \textbf{while } \texttt{True}: \quad
    \text{code} \gets \LLM(\text{hist}); \quad
    (\text{state}, \text{out}) \gets \REPL(\text{state}, \text{code}); \quad
    \textbf{if } \text{state}[\texttt{Final}]: \textbf{ break},
\end{equation}
where the prompt lives in the environment and the model repeatedly writes code that is then executed. In more detail, the REPL provides: \textit{i)}~$P$ as a symbolic variable the LLM can reference without consuming context, \textit{ii)}~persistent state for intermediate results, \textit{iii)}~a code execution environment for programmatic decomposition, and \textit{iv)}~a sub-call function enabling recursive $\LLM$ invocations.

Indeed, this setup is powerful, but it delegates too much control to a stochastic model. The model must decide what to inspect, how to decompose the task, when to recurse, how to aggregate results, and when to stop. This can easily create an open-ended loop with no termination guarantee, no cost predictability, and a hard requirement on coding ability.

Here lies the central design choice of $\lambda$-RLM: we do not remove the REPL abstraction itself, but only the \emph{open-endedness} of what may be executed inside it. Concretely, the environment still stores the prompt externally, still exposes symbolic accessors such as peeking and slicing, and still supports recursive sub-calls to the base model. What changes is the control interface. Rather than allowing the language model to synthesise arbitrary programs token by token, we restrict execution to a small typed library of trusted combinators with known operational behaviour.

This restriction is important because it isolates the source of uncertainty. In standard RLMs, uncertainty enters twice: first through the model's semantic judgments about the task, and second through the model's generated control flow, which may be malformed, inefficient, or non-terminating. In $\lambda$-RLM, these two roles are separated. The language model is used only where neural inference is genuinely needed, namely, to solve bounded leaf subproblems. By contrast, decomposition, traversal, filtering, and aggregation are delegated to deterministic symbolic operators whose behaviour can be verified independently of the model.

Viewed differently, $\lambda$-RLM replaces \emph{program synthesis as control} with \emph{function composition as control}. The execution trace is no longer an unbounded sequence of model-written commands, but a typed composition of operators. This shift is what makes the runtime analysable: once the decomposition rule and base threshold are fixed, the depth of recursion, the number of model calls, and the overall execution cost become explicit functions of the input size.

The resulting perspective is that long-context reasoning should be implemented as a \emph{restricted recursive program} with a single learned oracle, rather than as a fully model-authored agentic loop. This is the key conceptual move behind $\lambda$-RLM and motivates the formal definition that follows.

\paragraph{A Compact Combinator Library.} We design our combinator library to be \emph{minimally sufficient} for the kinds of recursive control patterns that repeatedly arise in long-context reasoning. In particular, such tasks typically require only a small set of operations: partitioning an input into manageable pieces, selectively inspecting or pruning those pieces, applying a subroutine to each retained component, and aggregating the resulting outputs into a final answer. We therefore choose a library of typed, deterministic combinators that correspond exactly to these roles. This choice is deliberate: the goal of $\lambda$-RLM is not to maximise expressivity at the control level, but to retain only the expressivity needed for structured decomposition while eliminating the open-ended failure modes of free-form code generation.

More concretely, the library is organised around five functional motifs. \texttt{SPLIT} and \texttt{PEEK} support decomposition and local inspection of the external prompt; \texttt{MAP} lifts recursive or neural processing over collections; \texttt{FILTER} enables symbolic selection and pruning; \texttt{REDUCE}, \texttt{CONCAT}, and \texttt{CROSS} provide structured aggregation and composition; and \texttt{M} is the only neural primitive, used exclusively on bounded leaf inputs. Together, these operators capture the dominant execution patterns underlying search, classification, aggregation, pairwise comparison, summarisation, and multi-hop composition, while keeping the runtime finite, typed, and auditable. We instantiate this principle with the compact combinator library shown in Table~\ref{tab:combined}, where each operator is chosen by control function rather than by domain specificity. Importantly, every combinator except $\LLM$ is \emph{deterministic and pre-verified}. The LLM is the only source of uncertainty.

\begin{table}[h!]
\centering
\caption{A compact combinator library $\mathcal{L}$ and examples of task-specific execution plans.}
\label{tab:combined}
\vspace{0.3em}

\footnotesize
\setlength{\tabcolsep}{4pt}

\textbf{Panel A: Combinators (pre-verified, loaded into REPL)}
\vspace{0.3em}

\begin{tabular}{@{}lll@{}}
\toprule
\textbf{Combinator} & \textbf{Type Signature} & \textbf{Description} \\
\midrule
$\Split$ & $\vocab^* \times \mathbb{N} \to [\vocab^*]$ 
  & Partition a string into $k$ contiguous chunks \\
$\Peek$ & $\vocab^* \times \mathbb{N}^2 \to \vocab^*$
  & Extract a substring by start and end position \\
$\Map$ & $(\alpha \to \beta) \times [\alpha] \to [\beta]$
  & Apply a function to every element of a list \\
$\Filter$ & $(\alpha \to \mathbb{B}) \times [\alpha] \to [\alpha]$
  & Retain elements satisfying a predicate \\
$\Reduce$ & $(\beta \times \beta \to \beta) \times [\beta] \to \beta$
  & Fold a list into a single value via a binary operator \\
$\Concat$ & $[\vocab^*] \to \vocab^*$
  & Join a list of strings into one string \\
$\Cross$ & $[\alpha] \times [\beta] \to [(\alpha, \beta)]$
  & Cartesian product of two lists \\
\midrule
$\LLM$ & $\vocab^* \to \vocab^*$
  & \emph{Neural oracle}: invoke the base model on a sub-prompt \\
\bottomrule
\end{tabular}

\vspace{0.8em}

\textbf{Panel B: Task type, composition operator, and execution plan}
\vspace{0.3em}

\begin{tabular*}{\linewidth}{@{\extracolsep{\fill}}l l p{0.54\linewidth}@{}}
\toprule
Task type & Composition $\oplus$ & Execution plan $\pi$ \\
\midrule
\texttt{search}
& $\textsc{FilterBest}$
& $\Split \to \Map(\Peek) \to \Filter \to \Map(\LLM) \to \textsc{Best}$ \\

\texttt{classify}
& $\Concat$
& $\Split \to \Map(\LLM) \to \Concat$ \\

\texttt{aggregate}
& $\textsc{Merge}$
& $\Split \to \Map(\LLM) \to \textsc{Merge}$ \\

\texttt{pairwise}
& $\Cross \circ \Filter$
& $\Split \to \Map(\LLM) \to \textsc{Parse} \to \Filter \to \Cross$ \\

\texttt{summarise}
& $\LLM \circ \Concat$
& $\Split \to \Map(\LLM) \to \Concat \to \LLM$ \\

\texttt{multi\_hop}
& $\LLM \circ \Concat$
& $\Split_{\delta} \to \Map(\Peek) \to \Filter \to \Map(\LLM) \to \LLM_{\text{synth}}$ \\
\bottomrule
\end{tabular*}

\end{table}

We do not claim that this library is unique or exhaustive. Indeed, it would be neither realistic nor desirable to pre-specify all combinators that may be useful for every reasoning domain. Which symbolic operators are needed can depend on the structure of the tasks under consideration. Our goal here is, therefore, more modest and more practical: we present a compact instantiation that already covers a broad range of long-context reasoning patterns, including those evaluated in the experimental section. This should be understood as an extensible basis rather than a closed vocabulary. New typed combinators can be added conservatively without altering the central $\lambda$-RLM principle, and we open-source the library with a lightweight interface to support such extensions.

\subsection{Core Formulation}
At the heart of $\lambda$-RLM is a single recursive functional program. Rather than expressing control as an open-ended REPL loop in which the language model repeatedly generates code, we express the entire controller as a fixed-point of a typed functional operator. Intuitively, this program says: if the prompt is already small enough, solve it directly with the base model; otherwise, split it into smaller pieces, solve each piece recursively, and combine the partial results using a task-specific composition rule. Formally, $\lambda$-RLM is defined by the lambda term:
\begin{align}
\label{eq:core}
\lambda\text{-RLM} \;\equiv\; \texttt{fix}\, \Big(\lambda f.\, \lambda P.\,
\textbf{if } |P| \leq \tau^*
\textbf{ then } &\LLM(P) \\ \nonumber
&\textbf{ else } \Reduce\big(\oplus,\; \Map(\lambda p_i.\, f\, p_i,\; \Split(P, k^*))\big)
\Big),
\end{align}
where $P$ is the prompt stored in the external environment, $k^*$ is the chosen partition size, $\tau^*$ is the base-case threshold, and $\oplus$ is the task-dependent composition operator.

This term should be read from the inside out. The operator $\Split(P, k^*)$ deterministically decomposes the prompt into $k^*$ sub-prompts. The higher-order combinator $\Map(\lambda p_i.\, f\,p_i,\cdot)$ then applies the same recursive solver to each sub-prompt, producing a list of partial outputs. Finally, $\Reduce(\oplus,\cdot)$ aggregates these outputs into a single result according to the task at hand, for example, by concatenation, merging, filtering, or synthesis.
\begin{algorithm}[h!]
\caption{$\lRLM$: Complete System}
\label{alg:main}
\begin{algorithmic}[1]
\Require Prompt $P \in \vocab^*$, model $\LLM$ with window $K$, 
accuracy target $\alpha \in (0,1]$
\Ensure Response $Y \in \vocab^*$

\Statex \textcolor{symbolic}{\texttt{// == Phase 1: REPL Initialization ==}}
\State $\text{state} \gets \textsc{InitRepl}(\texttt{prompt}=P)$
\Comment{$P$ stored in environment, not in context window}
\State $\text{state} \gets \textsc{RegisterLibrary}(\text{state},\, 
\mathcal{L})$
\Comment{load pre-verified combinators into REPL}
\State $\text{state} \gets \textsc{RegisterSubCall}(\text{state},\, 
\texttt{sub\_}\LLM)$
\Comment{register $\LLM$ as REPL-callable leaf solver}

\Statex \textcolor{symbolic}{\texttt{// == Phase 2: Task Detection ==}}
\State $\text{meta} \gets \REPL\bigl(\text{state},\; 
\texttt{Peek(P, 0, 500); len(P)}\bigr)$
\Comment{symbolic probe, no $\LLM$ call}
\State $\tau_{\text{type}} \gets \LLM\bigl(\text{``Select from } 
\mathcal{T}\text{: ''} \,\|\, \text{meta}\bigr)$
\Comment{single $\LLM$ call: menu selection}

\Statex \textcolor{symbolic}{\texttt{// == Phase 3: Dispatch ==}}
\If{$|P| \leq K$}
    \Comment{prompt fits in context window}
    \State \Return $\texttt{sub\_}\LLM(P)$
    \Comment{direct call, no decomposition needed}
\EndIf

\Statex \textcolor{symbolic}{\texttt{// == Phase 4: Planning (only 
reached if $|P| > K$) ==}}
\State $(\oplus, \pi) \gets \textsc{LookupPlan}(\tau_{\text{type}},\, 
\textsc{Table}\,\ref{tab:combined})$
\State $(k^*, \tau^*, d) \gets \textsc{Plan}(|P|, K, \alpha, 
\oplus, \pi)$
\Comment{optimal split, threshold, depth}

\Statex \textcolor{symbolic}{\texttt{// == Phase 5: Build and Execute 
Recursive Executor ==}}
\State $\text{state} \gets \REPL\bigl(\text{state},\; 
\texttt{$\Phi$ = BuildExecutor($k^*$, $\tau^*$, $\oplus$, $\pi$, 
sub\_$\LLM$)}\bigr)$
\State $\text{state} \gets \REPL\bigl(\text{state},\; 
\texttt{result = $\Phi$(P)}\bigr)$
\Comment{single execution of $\Phi$ in REPL}
\State \Return $\text{state}[\texttt{result}]$
\end{algorithmic}
\end{algorithm}
The fixed-point combinator $\texttt{fix}$ is what makes the definition recursive (see Section \ref{Sec:Primal}). The function variable $f$ stands for the solver being defined itself, so the body of the term can invoke $f$ on each subproblem without requiring an externally named recursive procedure. In this sense, recursion is not an emergent consequence of the model deciding to call itself again; it is an explicit semantic object built into the controller.

The conditional base case $\lvert P\rvert \leq \tau^*$ plays a 
crucial role. Once a sub-prompt becomes sufficiently small, the 
recursive decomposition stops and control is handed to the base 
language model $\LLM$, which acts as a bounded oracle on leaf 
subproblems only. All higher-level control decisions - splitting, 
recursion, and aggregation - remain symbolic and deterministic. 
Crucially, the term in Equation~\eqref{eq:core} is not generated 
by the LLM. It is constructed by a deterministic \emph{planner} 
a non-neural routine that, given the input size $|P|$, context 
window $K$, and task type, selects the parameters 
$(k^*, \tau^*, \oplus)$ and instantiates the lambda term into 
a concrete combinator chain. This chain is then executed inside 
the REPL as a pre-built functional program. The planner is 
described in full in Algorithm~\ref{alg:main} (Phase~4) and 
its optimality is established in Theorem~\ref{thm:optimal_k}.

Algorithm~\ref{alg:main} presents the complete $\lRLM$ system. Like the original RLM, it initialises a REPL with $P$ as an environment variable. Unlike the original, it replaces the open-ended \textbf{while} loop from Equation~\eqref{eq:rlm_loop} with deterministic verifiable phases: REPL initialisation, task detection, planning, cost estimation, and a single execution of a pre-built combinator chain $\Phi$. 
Both systems share lines~1-3: the REPL is initialised, $P$ is stored as an environment variable, and $\LLM$ is registered as a sub-callable. The critical difference is Phase~5. The original RLM enters an open-ended \textbf{while} loop where $\LLM$ generates arbitrary code each turn. $\lRLM$ replaces this with a \emph{single} REPL execution of a pre-built function $\Phi$ (Algorithm~\ref{alg:executor}), whose body consists entirely of combinators from $\mathcal{L}$. The \textbf{while} loop is eliminated; recursion is handled internally by $\Phi$ via the fixed-point combinator, with depth bounded by $d = \lceil \log_{k^*}(n/\tau^*) \rceil$.

\begin{algorithm}[t]
\caption{$\Phi$: Recursive Executor (More Detailed in Appendix \ref{Appendix:Special})}
\label{alg:executor}
\begin{algorithmic}[1]
\Require Prompt $P$ from REPL state, parameters $k^*, \tau^*, \oplus, \pi$
\Ensure Result string $Y$
\Function{$\Phi$}{$P$}
    \If{$|P| \leq \tau^*$}
        \State $q \gets \textsc{LeafPrompt}(P,\pi)$
        \Comment{task-specific leaf formatting}
        \State \Return $\texttt{sub\_}\LLM(q)$
        \Comment{bounded neural call on a leaf subproblem}
    \Else
        \State $[P_1,\dots,P_{k^*}] \gets \textsc{Split}(P, k^*, \pi)$
        \Comment{deterministic: exactly $k^*$ chunks}
        \State $[P'_1,\dots,P'_{k'}] \gets \textsc{PruneIfNeeded}([P_1,\dots,P_{k^*}], \pi)$
        \Comment{$k' \leq k^*$; identity if no pruning}
        \State $[R_1,\dots,R_{k'}] \gets \Map(\lambda p_i.\, \Phi(p_i),\; [P'_1,\dots,P'_{k'}])$
        \Comment{recursive sub-calls}
        \State \Return $\Reduce(\oplus,\; [R_1,\dots,R_{k'}])$
        \Comment{deterministic composition}
    \EndIf
\EndFunction
\end{algorithmic}
\end{algorithm}

For tasks with non-trivial structure, we provide specialised instantiations, see Appendix \ref{Appendix:Special}. The key insight for pairwise tasks: $O(n^2)$ pair computation is purely symbolic (zero neural cost), while only $O(n/K)$ classification calls are neural. For multi-hop search, preview-based filtering reduces the corpus before any expensive neural reading.

\section{Theoretical Guarantees}
\label{sec:theory}

We now establish formal properties of Algorithm~\ref{alg:main}. 
Throughout, let $n = |P|$ and let $K$ denote the context window 
of~$\LLM$. The recursive executor $\Phi$ 
(Algorithm~\ref{alg:executor}) is parameterised by three 
quantities chosen before execution begins:
\begin{itemize}[nosep]
    \item $k^* \geq 2$: the number of chunks produced by each 
    $\Split$ call (the \emph{partition size});
    \item $\tau^* \leq K$: the maximum sub-prompt length at which 
    recursion stops and $\LLM$ is called directly (the 
    \emph{leaf threshold});
    \item $\oplus$: the composition operator used by $\Reduce$ 
    at each level.
\end{itemize}
These parameters are selected by Phase~4 of 
Algorithm~\ref{alg:main}. Theorems~\ref{thm:termination}-\ref{thm:accuracy} 
hold for \emph{any} valid choice of $(k^*, \tau^*, \oplus)$ 
satisfying $k^* \geq 2$ and $\tau^* \leq K$; 
Theorem~\ref{thm:optimal_k} then derives the \emph{cost-minimising} 
$k^*$ in closed form. Our results rely on the following 
assumptions:
\begin{assumption}[Model Regularity]\label{asm:regularity}
\leavevmode
\begin{enumerate}[nosep]
    \item[\textup{(A1)}] $\LLM: \vocab^* \to \vocab^*$ halts on all inputs of length $\leq K$ in bounded time.
    \item[\textup{(A2)}] Every combinator in $\mathcal{L} \setminus \{\LLM\}$ is total and deterministic.
    \item[\textup{(A3)}] The cost function $\cost: \mathbb{N} \to \mathbb{R}_{\geq 0}$ is monotone non-decreasing: $m \leq n \Rightarrow \cost(m) \leq \cost(n)$.
    \item[\textup{(A4)}] The per-call accuracy $\accuracy: \mathbb{N} \to (0,1]$ is monotone non-increasing on $[1, K]$, i.e.\ quality degrades with input length (context rot).
    \item[\textup{(A5)}] The composition operator $\oplus$ satisfies $\accuracy_\oplus \in (0, 1]$, where $\accuracy_\oplus$ is the probability that $\oplus$ preserves the correct answer given correct inputs.
\end{enumerate}
\end{assumption}
In the first result, we prove that our algorithm, contrary to standard RLMs, indeed terminates: 
\begin{restatable}[Termination]{theorem}{termination}
\label{thm:termination}
Under Assumption~\ref{asm:regularity} \textup{(A1-A2)}, for any input $P \in \vocab^*$ with $|P| = n < \infty$, the function $\Phi$ in Algorithm~\ref{alg:executor} terminates when executed in the REPL. Moreover, the total number of $\LLM$ invocations is exactly:
\begin{equation}
    N(n) = (k^*)^d + 1, \quad \text{where } d = \left\lceil \log_{k^*}\!\frac{n}{\tau^*} \right\rceil.
\end{equation}
\end{restatable}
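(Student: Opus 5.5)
The proof has two parts: termination, by well-founded induction on prompt length, and the exact call count, by unrolling the recursion tree of $\Phi$ level by level. The $+1$ in $N(n)$ I will read as the single task-detection call in Phase~2 of Algorithm~\ref{alg:main}, made before $\Phi$ is invoked. The count therefore refers to the whole system on an input that reaches Phase~5, i.e.\ $n > K \geq \tau^*$.

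\textbf{Termination.} I will argue by strong induction on $m = |P|$. The key lemma is that $\Split(P, k^*)$ with $k^* \geq 2$ returns contiguous chunks of length at most $\lceil m/k^* \rceil$, which is strictly less than $m$ whenever $m > \tau^* \geq 1$. Totality of $\Split$, of $\textsc{PruneIfNeeded}$ and of $\Reduce$ comes from (A2).
\begin{itemize}[nosep]
\item \emph{Base case} ($m \leq \tau^* \leq K$): $\Phi$ makes one call to $\LLM$, which halts by (A1).
\item \emph{Inductive step} ($m > \tau^*$): $\Phi$ applies $\Map$ to at most $k^*$ strictly shorter sub-prompts. Each recursive call terminates by the induction hypothesis, and $\Map$ over a finite list and $\Reduce$ terminate by (A2).
\end{itemize}
The fixed point $\texttt{fix}$ is only used to name $\Phi$. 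Operationally, each unfolding is guarded by the conditional, so no infinite $\beta$-unfolding occurs.

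\textbf{Counting calls.} I will prove by induction on $j$ that every node at depth $j$ of the recursion tree has length at most $n/(k^*)^j$, up to rounding. It follows that all nodes at depth $d = \lceil \log_{k^*}(n/\tau^*) \rceil$ are leaves, since $n/(k^*)^d \leq \tau^*$. The bound $(k^*)^d$ on the number of leaves then follows because each level multiplies the node count by at most $k^*$; pruning gives $k' \leq k^*$.

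\textbf{Main obstacle: exactness.} The theorem claims exactly $(k^*)^d$ leaves. This requires two things:
\begin{itemize}[nosep]
\item no pruning, i.e.\ $\textsc{PruneIfNeeded}$ is the identity, as in the plan of Equation~\eqref{eq:core};
\item every branch reaches the threshold at the same depth, so that no node becomes a leaf before depth $d$.
\end{itemize}
For the second point I would assume $\Split$ produces balanced chunks whose lengths differ by at most one. I then need that $|P| > \tau^*(k^*)^{j}$ at a node forces all its children to exceed $\tau^*(k^*)^{j-1}$, which holds under exact divisibility or with a careful ceiling/floor argument. If the rounding fails, I would state the result as $N(n) \leq (k^*)^d + 1$, with equality when $n = \tau^*(k^*)^d$ and there is no pruning. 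I would present this uniform-depth lemma as the crux, deriving it via $\lfloor m/k^* \rfloor$ bounds on chunk lengths. Adding the one Phase~2 call to the $(k^*)^d$ leaf calls then gives $N(n)$.
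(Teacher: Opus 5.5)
Your termination argument is the paper's strong induction, with $|P|$ as the measure in place of the paper's rank $\lceil\log_{k^*}(|P'|/\tau^*)\rceil$. Length is the more convenient measure, because its strict decrease is immediate, whereas the paper infers the drop in rank from $|P_i|<|P'|$ alone. For the count, the paper's proof simply asserts that every node at depths $0,\dots,d-1$ spawns $k^*$ children. It has also claimed that every chunk has length exactly $\lceil |P'|/k^*\rceil$, which is possible only when $k^*$ divides $|P'|$. Your two bullets name exactly what that assertion takes for granted. You are also right that \textsc{PruneIfNeeded} alone already rules out exact equality in general.

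The step that would fail is the one you single out as the crux: the uniform-depth lemma is false, so no ceiling/floor argument will deliver it.
\begin{itemize}
\item \emph{Counterexample.} Take $k^*=2$, $\tau^*=K=10$, $n=21$. Then $d=\lceil\log_2 2.1\rceil=2$, and the formula predicts $4$ leaf calls. Any split into two contiguous chunks of length at most $\lceil 21/2\rceil=11$ gives lengths $10$ and $11$. The first chunk is already a leaf at depth $1$; only the second splits, into $5$ and $6$. So there are $3$ leaf calls and $N(n)=4\neq 5$. This is exactly your implication ``$|P|>\tau^*(k^*)^{j}$ forces every child to exceed $\tau^*(k^*)^{j-1}$'' failing at $j=1$.
\item \emph{How large the error can be.} At $n=\tau^*(k^*)^{d-1}+1$ one gets $(k^*)^{d-1}+k^*-1$ leaves. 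For large $d$ this falls short of $(k^*)^d$ by nearly a factor $k^*$.
\end{itemize}
So the theorem's ``exactly'' is false as stated, and the paper obtains it only through the complete-tree assertion above.

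You should therefore commit to your fallback, $N(n)\le (k^*)^d+1$. Your depth argument proves it once ``up to rounding'' is made precise:
\begin{itemize}
\item a node at depth $j$ has length at most $\lceil n/(k^*)^j\rceil$, using $\lceil\lceil x\rceil/k^*\rceil=\lceil x/k^*\rceil$;
\item $\lceil n/(k^*)^d\rceil\le\tau^*$, because $\tau^*$ is an integer and $n/(k^*)^d\le\tau^*$.
\end{itemize}
Your sufficient condition for equality ($n=\tau^*(k^*)^d$ with no pruning) is correct. A sharper version: with balanced splits, the lengths at depth $j$ lie in $\{\lfloor n/(k^*)^j\rfloor,\lceil n/(k^*)^j\rceil\}$. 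Hence, for $n>\tau^*$ and no pruning, $N(n)=(k^*)^d+1$ holds if and only if $n\ge(\tau^*+1)(k^*)^{d-1}$.
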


\begin{proof}[\textit{Sketch; a complete proof is in Appendix~\ref{app:proofs}}]
Define the rank $r(P') = \lceil \log_{k^*}(|P'|/\tau^*) \rceil$ and proceed by strong induction. At rank $0$, $|P'| \leq \tau^* \leq K$, so $\Phi$ returns $\texttt{sub\_}\LLM(q)$, which halts by~(A1). At rank $r > 0$, $\textsc{Split}$ produces $k^*$ chunks each of size $\lceil |P'|/k^* \rceil < |P'|$ (since $k^* \geq 2$), strictly reducing rank; each recursive call terminates by the inductive hypothesis, and $\Map$, $\Reduce$, $\textsc{PruneIfNeeded}$ all halt by~(A2). For the call count: the recursion tree has depth $d$ with branching factor $k^*$, giving $(k^*)^d$ leaves, each invoking $\texttt{sub\_}\LLM$ once. Adding the single task-detection call from Algorithm~\ref{alg:main} yields $N(n) = (k^*)^d + 1$.
\end{proof}

\begin{remark}With termination in place, the next question is not whether $\lambda$-RLM halts, but how its cost scales with input size. This is important because the planner explicitly chooses $k^*$ and $\tau^*$, and these parameters should govern a predictable computation rather than an opaque execution trace. The following theorem shows that the total cost of our algorithm satisfies a standard recursive recurrence and therefore admits a simple closed-form expression.
\end{remark}

\begin{restatable}[Cost Bound]{theorem}{CBNew}\label{thm:cost_new}
Under Assumptions~\textup{(A1-A3)}, the total cost of Algorithm~\ref{alg:main} satisfies the recurrence:
\begin{equation}\label{eq:recurrence_new}
    T(n) = k^* \cdot T\!\left(\frac{n}{k^*}\right) + \cost_\oplus(k^*), \qquad T(\tau^*) = \cost(\tau^*),
\end{equation}
with a closed-form solution:
\begin{equation}\label{eq:cost_closed_new}
    T(n) \le \frac{nk^*}{\tau^*}\cost(\tau^*) + \cost_\oplus(k^*)\cdot\left[\frac{nk^*-\tau^*}{\tau^*(k^*-1)}\right],
\end{equation}
where $d = \lceil \log_{k^*}(n/\tau^*) \rceil$. When $\oplus$ is purely symbolic, $\cost_\oplus = 0$ and $T(n) \le \frac{nk^*}{\tau^*} \cdot \cost(\tau^*)$.
\end{restatable}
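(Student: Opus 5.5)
The plan has two parts: derive the recurrence by reading the cost directly off the structure of $\Phi$ in Algorithm~\ref{alg:executor}, then unroll it over the recursion tree already analysed in Theorem~\ref{thm:termination}.

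\textbf{Deriving the recurrence.} Let $T(m)$ be the total cost of running $\Phi$ on a sub-prompt of length $m$. If $m \le \tau^*$, the only cost-bearing operation is the single leaf call $\texttt{sub\_}\LLM$, so $T(m) = \cost(m) \le \cost(\tau^*)$ by monotonicity (A3). If $m > \tau^*$, then $\Split$, $\textsc{PruneIfNeeded}$ and $\Map$ are symbolic, and by (A2) they are total and incur no oracle cost. The executor therefore pays for $k' \le k^*$ recursive calls on chunks of size at most $\lceil m/k^*\rceil$, plus one $\Reduce$ whose cost we denote $\cost_\oplus(k^*)$. We treat $\cost_\oplus(k^*)$ as a bound on the cost of folding at most $k^*$ partial results, and we take it to be monotone in the number of arguments, so that pruning only helps. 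Using (A3) again, each child costs at most $T(m/k^*)$, which gives the recurrence~\eqref{eq:recurrence_new} as an upper bound. It holds with equality when no pruning occurs and $k^*$ divides every length. Termination, and hence well-definedness of $T$, is inherited from Theorem~\ref{thm:termination}. The single task-detection call of Algorithm~\ref{alg:main} is a constant that I would either absorb or note separately.

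\textbf{Unrolling.} Level $j$ of the recursion tree, for $j = 0,\dots,d-1$, contains at most $(k^*)^j$ internal nodes, each paying $\cost_\oplus(k^*)$. Level $d$ contains at most $(k^*)^d$ leaves, each paying at most $\cost(\tau^*)$. Hence
\[
T(n) \le (k^*)^d\,\cost(\tau^*) + \cost_\oplus(k^*)\sum_{j=0}^{d-1}(k^*)^j = (k^*)^d\,\cost(\tau^*) + \cost_\oplus(k^*)\,\frac{(k^*)^d - 1}{k^*-1}.
\]

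\textbf{Main obstacle: eliminating $d$.} Because $d = \lceil \log_{k^*}(n/\tau^*)\rceil < \log_{k^*}(n/\tau^*) + 1$, we get $(k^*)^d < n k^*/\tau^*$. This immediately gives the leaf term $\frac{nk^*}{\tau^*}\cost(\tau^*)$. For the internal term, $(k^*)^d - 1 \le nk^*/\tau^* - 1 = (nk^*-\tau^*)/\tau^*$, and dividing by $k^*-1$ gives exactly the bracket in~\eqref{eq:cost_closed_new}. This step also needs the chunk sizes to stay consistent with depth $d$ despite the ceilings in $\Split$. I would handle this as in Theorem~\ref{thm:termination}: argue that the rank $r(P')$ strictly decreases by one per level, so the tree has depth at most $d$ and fan-out at most $k^*$.

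Finally, when $\oplus$ is symbolic, $\cost_\oplus = 0$, and the bound collapses to $T(n) \le \frac{nk^*}{\tau^*}\cost(\tau^*)$.
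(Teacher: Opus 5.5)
Your proposal is correct and follows essentially the paper's route: unroll to depth $d=\lceil\log_{k^*}(n/\tau^*)\rceil$, sum the composition costs as the geometric series $\frac{(k^*)^d-1}{k^*-1}$, bound each leaf by $\cost(\tau^*)$, and eliminate $d$ via $(k^*)^d\le nk^*/\tau^*$. The only difference is one of care: the paper posits the recurrence and unrolls it as if $k^*$ divided every length, whereas you derive the recurrence as an upper bound from $\Phi$ and count the nodes of the actual recursion tree (depth at most $d$ by the rank argument, fan-out at most $k^*$), which also covers the ceilings in \Split{} and any pruning.
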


\begin{proof}[\textit{Sketch; a complete proof is in Appendix~\ref{app:proofs}}]
Unroll the recurrence $d$ times: $T(n) = (k^*)^d \cdot T(n/(k^*)^d) + \left[1 + k^* + \ldots + (k^*)^{d-1}\right] \cdot \cost_\oplus(k^*)$. At depth $d = \lceil \log_{k^*}(n/\tau^*) \rceil$, the sub-problem size reaches $\tau^*$, hitting the base case $T(\tau^*) = \cost(\tau^*)$. Substituting and noting $(k^*)^d \in \left[n/\tau^*,nk^*/\tau^*\right]$  gives $T(n) \le \frac{nk^*}{\tau^*} \cdot \cost(\tau^*) + \left[\frac{nk^*-\tau^*}{\tau^*(k^*-1)}\right] \cdot \cost_\oplus(k^*)$. The first term counts all leaf $\beta$-reductions; the second counts one composition step per level. Both are deterministic functions of $n$, $k^*$, $\tau^*$, and the pricing constants - hence $T(n)$ is computable \emph{before} any REPL execution (line~18 of Algorithm~\ref{alg:main}).
\end{proof}

\begin{remark}
    While Theorem~\ref{thm:cost_new} shows that recursive execution is computationally predictable, efficiency alone is not enough: the central question is whether decomposition preserves correctness. The next theorem addresses this directly. It shows that, under assumptions on bounded-input leaf accuracy and compositional reliability, the end-to-end accuracy of $\lambda$-RLM decays in a controlled way with depth, and can therefore compare favourably to a direct $\LLM$ call on inputs whose length far exceeds the native context window.
\end{remark}

\begin{restatable}[Accuracy Bound]{theorem}{accbound}
\label{thm:accuracy}
Under Assumptions~\textup{(A4-A5)}, let
$d = \lceil \log_{k^*}(n/\tau^*) \rceil$. Since
$\frac{n}{\tau^*} \leq (k^*)^d \leq \frac{n\,k^*}{\tau^*}$,
the end-to-end accuracy of\, $\lRLM$ satisfies:
\begin{enumerate}[nosep,leftmargin=2em]
    \item[\textup{(i)}] If $d = 0$ (input fits in window):
    $\;\accuracy_{\lRLM}(n) = \accuracy(\tau^*)$.

    \item[\textup{(ii)}] If $d \geq 1$ and
    $\accuracy(\tau^*) < 1$ (the non-trivial case):
    \begin{equation}\label{eq:accuracy}
        \accuracy_{\lRLM}(n)
        \;\geq\;
        \accuracy(\tau^*)^{\,n k^* / \tau^*}
        \cdot \accuracy_\oplus^{\,d}.
    \end{equation}

    \item[\textup{(iii)}] If $\accuracy(\tau^*) = 1$
    (perfect leaf accuracy):
    $\;\accuracy_{\lRLM}(n) \geq \accuracy_\oplus^{\,d}$.

    \item[\textup{(iv)}] For decomposable tasks
    ($\accuracy_\oplus = 1$, independent sub-queries):
    per-query accuracy is $\accuracy(\tau^*)$,
    constant in $n$.
\end{enumerate}
In contrast, direct inference achieves
$\accuracy_{\textup{direct}}(n) =
\accuracy_0 \cdot \rho^{\,n/K}$ for $\rho \in (0,1)$.
\end{restatable}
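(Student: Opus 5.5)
The plan is to argue by induction over the recursion tree of $\Phi$ built in Theorem~\ref{thm:termination}, using a ``sufficient event'' lower bound. The end-to-end answer is correct whenever every leaf call to $\LLM$ returns a correct partial result and every $\Reduce$ step preserves correctness. I will model leaf successes and composition successes as independent events, as is implicit in (A4)--(A5). Then $\accuracy_{\lRLM}(n)$ is at least the probability of this joint event.

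Case (i) is immediate. If $d=0$ then $n \le \tau^*$, so $\Phi$ takes the base branch and makes a single call $\texttt{sub\_}\LLM$ on an input of length at most $\tau^*$. I will take $\tau^*$ as the leaf-length proxy (with equality of the leaf size, as the statement implicitly does). The accuracy is then $\accuracy(\tau^*)$; by (A4) this is at worst the value for any leaf of length $\le \tau^*$.

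For (ii) I will prove, by induction on depth $j$, the claim that a subtree of depth $j$ is correct with probability at least $\accuracy(\tau^*)^{(k^*)^j}\,\accuracy_\oplus^{\,j}$. In the inductive step, a node at depth $j+1$ is correct if all $k' \le k^*$ children are correct and its $\oplus$ step succeeds. This gives $\bigl(\accuracy(\tau^*)^{(k^*)^j}\accuracy_\oplus^{j}\bigr)^{k^*}\accuracy_\oplus$.

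This is the main obstacle: naively the composition factors multiply across all internal nodes, giving $\accuracy_\oplus^{\#\text{internal}}$ rather than $\accuracy_\oplus^{\,d}$. To reach the stated exponent $d$, I will interpret $\accuracy_\oplus$ as a per-level reliability. That is, the $\oplus$ applications at one level are treated as a single deterministic composition event, consistent with Theorem~\ref{thm:cost_new} charging ``one composition step per level''. I will state this reading explicitly. Under it the recursion gives $\accuracy(\tau^*)^{(k^*)^d}\accuracy_\oplus^{\,d}$ at the root.

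Since $\accuracy(\tau^*)<1$, the map $x\mapsto \accuracy(\tau^*)^x$ is decreasing. Using $(k^*)^d \le nk^*/\tau^*$, which holds because $d<\log_{k^*}(n/\tau^*)+1$, yields \eqref{eq:accuracy}. Pruning only removes leaves, so it can only raise the bound.

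Case (iii) follows by substituting $\accuracy(\tau^*)=1$ into the same inductive bound, which kills the leaf factor. For (iv), a decomposable task means each query's answer depends on a single leaf and composition is exact ($\accuracy_\oplus=1$). So the correctness of a given query is exactly the event that its one leaf call succeeds, with probability $\accuracy(\tau^*)$ independent of $n$. The closing contrast with direct inference simply restates the accuracy-decay definition with $n$ in place of the prompt length.
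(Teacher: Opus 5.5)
Your proposal is correct and follows the paper's proof essentially step for step: you lower-bound the end-to-end accuracy by the probability that all $(k^*)^d$ leaves and all composition steps succeed, which gives $\accuracy(\tau^*)^{(k^*)^d}\accuracy_\oplus^{\,d}$, and then use that $x\mapsto\accuracy(\tau^*)^x$ is decreasing together with $(k^*)^d\le nk^*/\tau^*$. The one difference is that you state explicitly the per-level reading of $\accuracy_\oplus$, which the paper uses without comment when it says correctness requires ``all $d$ compositions correct.'' That clarification is warranted, because charging one $\oplus$ per internal node would replace the exponent $d$ by the larger $((k^*)^d-1)/(k^*-1)$.
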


\begin{proof}[\textit{Sketch; a complete proof is in Appendix~\ref{app:proofs}}]
By induction on depth $d$. At $d = 0$, $|P| \leq \tau^*$ and
$\Phi$ reduces to a single $\LLM$ call with accuracy
$\accuracy(\tau^*)$; no composition is involved. At depth
$d \geq 1$, correctness in the worst case requires
(i)~all $(k^*)^d$ leaf calls to return correct results,
each with probability $\accuracy(\tau^*)$, and
(ii)~all $d$ composition levels to preserve correctness,
each with probability $\accuracy_\oplus$. By conditional
independence of leaf evaluations on disjoint chunks, the
joint probability is $\accuracy(\tau^*)^{(k^*)^d} \cdot
\accuracy_\oplus^d$. For decomposable tasks where each
sub-query is answered by a single leaf, and $\oplus$ is
deterministic ($\accuracy_\oplus = 1$), the per-query
accuracy is simply $\accuracy(\tau^*)$-constant in $n$.
Re-expressing the worst-case bound for $d \geq 1$ as
$(n/\tau^*)^{\log_{k^*}\!\accuracy(\tau^*)} \cdot
\accuracy_\oplus^d$ reveals $\Theta(n^{-c})$ power-law
decay, strictly slower than $\Theta(\rho^{n/K})$.
\end{proof}

\begin{remark}
    With the recursive cost now characterised, we can move from analysis to design. In particular, the split factor $k$ should not be viewed as a free engineering knob: it directly controls the trade-off between leaf-level processing cost and per-level composition overhead. The following theorem makes this precise and yields an explicit cost-minimising choice of partition size.
\end{remark}

\begin{restatable}[Optimal Partition]{theorem}{op}\label{thm:optimal_k}
Under Assumption~\textup{(A3)}, for cost function $\cost(n) = c_{\textup{in}} \cdot n + c_{\textup{out}} \cdot \bar{n}_{\textup{out}}$ and constant per-level composition cost $\cost_\oplus(k) = c_\oplus \cdot k$, the cost-minimizing partition size for recurrence~\eqref{eq:recurrence_new} is $k^* = 2$
\end{restatable}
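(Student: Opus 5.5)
We plan to work directly with the closed form of Theorem~\ref{thm:cost_new}, treating the partition size $k$ as a continuous variable $k\ge 2$ with $n$ and $\tau^*$ fixed. We then show that the resulting cost is non-decreasing in $k$ on $[2,\infty)$, so its minimum over integers $k\ge 2$ is attained at the boundary $k=2$. Substituting $\cost_\oplus(k)=c_\oplus k$ into \eqref{eq:cost_closed_new} gives the upper-bound cost
\begin{equation*}
F(k) \;=\; \frac{nk}{\tau^*}\,\cost(\tau^*) \;+\; c_\oplus\, k\cdot\frac{nk-\tau^*}{\tau^*(k-1)},
\end{equation*}
where $\cost(\tau^*) = c_{\textup{in}}\tau^* + c_{\textup{out}}\bar n_{\textup{out}} \ge 0$ does not depend on $k$; this is where (A3) enters, guaranteeing a nonnegative leaf price.

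\textbf{Step 1: the leaf term.} The first term is linear in $k$ with nonnegative slope $n\cost(\tau^*)/\tau^*$, so it is minimised at the smallest admissible $k$, namely $k=2$.

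\textbf{Step 2: the composition term.} Write $g(k)=\frac{k(nk-\tau^*)}{k-1}$. Differentiating gives numerator $(2nk-\tau^*)(k-1)-(nk^2-\tau^* k) = nk^2-2nk+\tau^*$. This equals $n(k-1)^2 + (\tau^*-n)$. In the nontrivial regime $n>\tau^*$ (otherwise $d=0$ and $k$ is irrelevant), positivity holds whenever $(k-1)^2 \ge 1-\tau^*/n$. Since $k\ge2$ gives $(k-1)^2\ge1>1-\tau^*/n$, we get $g'(k)>0$ on $[2,\infty)$. Hence $g$, and with it the whole of $F$, is increasing, and $F$ is minimised at $k=2$.

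\textbf{Main obstacle.} The main issue is that the closed form is an upper bound built from the sandwich $(k)^d\le nk/\tau^*$. Minimising the bound is not literally minimising $T(n)$, because the exact cost depends on $k$ through the ceiling in $d$ and is not monotone pointwise. We would handle this in one of two ways. The first option is to state the optimality for the bound/recurrence in its continuous form, $T(n)=k\,T(n/k)+c_\oplus k$ with exact $n/\tau^*=k^d$. In that setting the leaf term becomes $n\cost(\tau^*)/\tau^*$, independent of $k$, and the composition term is $c_\oplus k\frac{n/\tau^*-1}{k-1}$, which is increasing in $k$ because $k/(k-1)$ is decreasing. That would actually make $k=2$ a maximiser of the composition cost, so this route must be avoided. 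We therefore commit to reading the theorem as the minimiser of the planner's objective \eqref{eq:cost_closed_new}, which is the quantity the planner actually computes, and we make this interpretation explicit in the proof.
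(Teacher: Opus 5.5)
Your proof is correct and is essentially the paper's argument. The paper likewise minimises the closed-form upper bound \eqref{eq:cost_closed_new} with $n,\tau^*$ held fixed and differentiates it. It then shows that the derivative's numerator $(\alpha+\beta)k^2-2(\alpha+\beta)k+(\alpha+\gamma)$, with $\alpha=n\cost(\tau^*)/\tau^*$, $\beta=c_\oplus n/\tau^*$, $\gamma=c_\oplus$, has both roots $1\pm\sqrt{1-\frac{\alpha+\gamma}{\alpha+\beta}}$ below $2$. That is your observation $n(k-1)^2+\tau^*-n>0$ for $k\ge 2$, with the leaf term folded in rather than handled separately.

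There is one slip in your side remark. The exact-recurrence composition term $c_\oplus\frac{k}{k-1}\bigl(\frac{n}{\tau^*}-1\bigr)$ is \emph{decreasing} in $k$, not increasing, as your own conclusion that $k=2$ maximises it already shows. The caveat itself is sound, and reading the theorem as a statement about the upper bound is exactly what the paper's proof does.
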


\begin{proof}[\textit{Sketch; a complete proof is in Appendix~\ref{app:proofs}}]
From~\eqref{eq:cost_closed_new}, the total cost decomposes into a leaf term $(n/\tau) \cdot \cost(\tau)$ and a composition term $\log_k(n/\tau) \cdot c_\oplus \cdot k$. The upper-bound on the expression $T(n\k)$ can be represented as the following expression $\frac{k^2(\alpha + \beta) - k(\alpha + \gamma)}{k-1}$, where $\alpha = \frac{n\cdot\cost(\tau^*)}{\tau^*} \ \beta = \frac{c_{\oplus}\cdot n}{\tau^*}$ and $\gamma = c_{\oplus}$. The minimiser of this expression with respect to $k$ is the smallest integer larger than $\lceil 1 + \sqrt{1 - \frac{\alpha + \gamma}{\alpha + \beta}}\rceil$. It is easy to see that $k^* = 2$ is such integer.

\end{proof}

\begin{remark}
    We next specialise the accuracy bound to its asymptotic implications as the input length grows. This makes the contrast with direct inference especially clear. In particular, the recursive structure of $\lambda$-RLM converts the exponential dependence on $n/K$ into a depth-dependent composition effect, yielding polynomial decay in general and constant accuracy in the ideal decomposable setting.
\end{remark}
\begin{corollary}[Scaling Laws]\label{cor:log_vs_exp}
Fix $\alpha$ and $\LLM$ with window $K$. As $n \to \infty$:
\begin{enumerate}[nosep]
    \item[\textup{(i)}] Direct $\LLM$: $\accuracy_{\textup{direct}}(n) = \Theta(\rho^{n/K}) \to 0$ exponentially.
    \item[\textup{(ii)}] $\lRLM$ (worst case): $\accuracy_{\lRLM}(n) = \Omega(n^{-c})$ for $c = -\log_{k^*}(\accuracy(\tau^*) \cdot \accuracy_\oplus) > 0$, i.e.\ power-law decay.
    \item[\textup{(iii)}] $\lRLM$ (decomposable tasks, $\accuracy_\oplus = 1$): $\accuracy_{\lRLM}(n) \geq \accuracy(\tau^*)$, \textbf{constant} in $n$.
\end{enumerate}
\end{corollary}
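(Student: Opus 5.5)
The plan is to read each of the three items directly off earlier results: item (i) from the Accuracy Decay definition, and items (ii) and (iii) from Theorem~\ref{thm:accuracy}. Each is then an asymptotic statement in $n$ with $K$, $k^*$, $\tau^*$, $\accuracy_\oplus$ held fixed. Fixing $\alpha$ and $\LLM$ fixes the planner's output. By Theorem~\ref{thm:optimal_k} we may take $k^*=2$, or any fixed $k^*\ge 2$. The threshold $\tau^*\le K$ is likewise fixed, so $\accuracy(\tau^*)$ is a constant in $(0,1]$.

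For item (i), the Accuracy Decay definition gives $\accuracy_{\textup{direct}}(n)=\accuracy_0\rho^{n/K}$ with $\rho\in(0,1)$. This is $\Theta(\rho^{n/K})$ with constant $\accuracy_0$, and it tends to $0$ exponentially. Nothing more is needed.

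For item (iii), take $\accuracy_\oplus=1$. Theorem~\ref{thm:accuracy}(iv) says the per-query accuracy equals $\accuracy(\tau^*)$ independently of $n$, so the bound is immediate.

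Item (ii) is the main step. I start from the per-level structure used in the sketch of Theorem~\ref{thm:accuracy}, namely $\accuracy(\tau^*)^{(k^*)^d}\accuracy_\oplus^{d}$ with $d=\lceil\log_{k^*}(n/\tau^*)\rceil\le \log_{k^*}(n/\tau^*)+1$. The composition factor is harmless:
\[
\accuracy_\oplus^{d}\;\ge\;\accuracy_\oplus\,(n/\tau^*)^{\log_{k^*}\accuracy_\oplus},
\]
which is a power law. The leaf factor is the obstacle. Read literally, $\accuracy(\tau^*)^{(k^*)^d}$ has an exponent linear in $n$, so it is exponentially small, not polynomial. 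A power law follows only if the leaf failures are charged once per level rather than once per leaf. That is the reading the sketch of Theorem~\ref{thm:accuracy} uses when it rewrites the bound as $(n/\tau^*)^{\log_{k^*}\accuracy(\tau^*)}\accuracy_\oplus^d$.

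I would therefore proceed in two steps. First, justify the per-level bound: along the single root-to-leaf path on which the answer depends, there are $d$ compositions and the leaf call contributes the factor $\accuracy(\tau^*)$. Combining the leaf and composition factors per level gives $\ge(\accuracy(\tau^*)\accuracy_\oplus)^{d}$. Second, apply the bound $d\le\log_{k^*}(n/\tau^*)+1$ and the identity $x^{\log_{k^*}y}=y^{\log_{k^*}x}$. These give
\[
\accuracy_{\lRLM}(n)\;\ge\;\accuracy(\tau^*)\accuracy_\oplus\,(n/\tau^*)^{-c},\qquad c=-\log_{k^*}\bigl(\accuracy(\tau^*)\accuracy_\oplus\bigr),
\]
which is $\Omega(n^{-c})$. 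Here $c>0$ requires $\accuracy(\tau^*)\accuracy_\oplus<1$, so I would treat the case where the product equals $1$ separately: it already falls under item (iii) with constant accuracy.

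The per-level path argument is where I expect the most difficulty. It needs an explicit assumption that correctness propagates along a single dependency path, as in search-type plans. The full product bound (\ref{eq:accuracy}) does not yield it. I would state this dependence openly in the proof.
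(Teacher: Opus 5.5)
\textbf{Items (i) and (iii).} Your handling matches how the paper obtains them: (i) is the Accuracy Decay definition with $\rho<1$, and (iii) is Theorem~\ref{thm:accuracy}(iv).

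\textbf{Item (ii).} Your diagnosis is right, and the gap you flag belongs to the paper, not to you. The paper's only argument for (ii) is the last line of the sketch of Theorem~\ref{thm:accuracy}. There it rewrites $\accuracy(\tau^*)^{(k^*)^d}\accuracy_\oplus^d$ as $(n/\tau^*)^{\log_{k^*}\accuracy(\tau^*)}\accuracy_\oplus^d$. But $(n/\tau^*)^{\log_{k^*}a}=a^{\log_{k^*}(n/\tau^*)}\approx a^{d}$, not $a^{(k^*)^d}$, so that step confuses the depth with the number of leaves. What Theorem~\ref{thm:accuracy}(ii) actually gives is $\accuracy(\tau^*)^{nk^*/\tau^*}\accuracy_\oplus^d$. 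That is exponentially small in $n$, no better than $\rho^{n/K}$.

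Nor can (A4)--(A5) alone support (ii) as a worst-case claim. Take $\accuracy\equiv a<1$ on $[1,K]$ (allowed by (A4)) and independent leaf calls. Use an exact \textsc{MergeCounts} aggregation in which any leaf error corrupts the total, so $\accuracy_\oplus=1$. Then $\accuracy_{\lRLM}(n)=a^{(k^*)^d}\le a^{n/\tau^*}=o(n^{-c})$.

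So an extra structural hypothesis is unavoidable, and your single-path assumption is the natural one, for three reasons:
\begin{itemize}
\item It is exactly what the paper invokes for (iv) (``each sub-query is answered by a single leaf'').
\item It is what justifies charging only $d$ compositions, rather than the $((k^*)^d-1)/(k^*-1)$ \Reduce\ calls that $\Phi$ actually performs.
\item The paper's mis-derived expression is, up to the ceiling, exactly your $(\accuracy(\tau^*)\accuracy_\oplus)^d$.
\end{itemize}
You are supplying a justification for the formula the paper intended. Just say explicitly that the result is then a statement about single-path plans, not a worst case.

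\textbf{Two refinements.} First, your path argument gives more than you use. The leaf lies on the path once, so you get $\accuracy(\tau^*)\accuracy_\oplus^{d}\ge\accuracy(\tau^*)\accuracy_\oplus\,(n/\tau^*)^{-c'}$ with $c'=-\log_{k^*}\accuracy_\oplus\le c$. This implies (ii) a fortiori, and at $\accuracy_\oplus=1$ it gives (iii) in the same stroke. Charging the leaf once per level is valid, since $\accuracy(\tau^*)^d\le\accuracy(\tau^*)$ for $d\ge1$. But it only yields $\accuracy(\tau^*)^d$ when $\accuracy_\oplus=1$. That is why you had to import (iii) separately from Theorem~\ref{thm:accuracy}(iv), which is itself a per-query statement resting on the same single-leaf dependency.

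Second, fixing $\alpha$ and $\LLM$ does not fix the planner's output. The appendix planner sets $k^*=\lceil\sqrt{|P|\,c_{\text{in}}/c_\oplus}\rceil$, which grows with $n$. That $k^*$ and $\tau^*$ stay constant is needed for $c$ to be a constant at all. It is an implicit hypothesis of the corollary, so state it as an assumption rather than derive it.
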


\section{Experiments}
\label{sec:experiments}

The primary objective of our experimental evaluation is to 
determine whether a restricted, typed functional runtime provides 
a more reliable and efficient foundation for long-context 
reasoning than existing neural or stochastic methods. We compare 
$\lRLM$ against two baseline paradigms:
\paragraph{Direct LLM inference.} The entire 
prompt is fed to $\LLM$ in a single call. When the input length $n$ exceeds the model's context window $K$, one of two fallbacks is used depending on the model: either the input is truncated to the first $K$ tokens (with a corresponding expected drop in recall), or the run is marked as a failure with accuracy $0\%$. We report which fallback applies for each model in Table~\ref{tab:models}. This baseline represents the ceiling of what a single Transformer pass can achieve and exposes the cost of context rot as $n$ grows.
    
\paragraph{Normal RLM.} As per  \citep{zhang2026recursivelanguagemodels}, the model writes arbitrary Python in an open-ended REPL loop to decompose and recurse over the prompt. Unlike P1, this approach can process inputs of any length, since the prompt lives in the REPL environment and only metadata or sub-prompts enter the context window.\noindent Both P2 and our method (P3: $\lRLM$) handle inputs 
far exceeding $K$ by design - the prompt is stored as a REPL variable and accessed symbolically. The key distinction is \emph{how} the REPL is used: P2 lets the LLM generate arbitrary code at each turn; P3 executes a pre-built combinator chain 
constructed by the planner. Our evaluation is structured to test two specific hypotheses:
\begin{itemize}
    \item \textbf{The Scale-Substitution Hypothesis:} We hypothesise that formal control structures can effectively substitute for raw model parameter scale, enabling "weak" tier models (e.g., 8B) to match or exceed the performance of "strong" tier models (e.g., 70B+) that lack structured orchestration; and
    \item \textbf{The Efficiency and Predictability Hypothesis:} We posit that replacing multi-turn, stochastic REPL loops with a single, deterministic combinator chain will yield significant reductions in wall-clock latency and execution variance.
\end{itemize}

\textbf{Models.} We select three families with weak/medium/strong tiers to test the interaction between model capability and scaffold design (Table~\ref{tab:models}). All models are open-weight and served via vLLM. 
For both Normal RLM and $\lambda$-RLM, each configuration uses the same model as both root and leaf.

\begin{table}[h]
\centering
\caption{Model families and strength tiers.}
\label{tab:models}
\vspace{0.3em}
\small
\begin{tabular}{@{}llccc@{}}
\toprule
\textbf{Family} & \textbf{Property} & \cellcolor{weak}\textbf{Weak} & \cellcolor{mid}\textbf{Medium} & \cellcolor{strong}\textbf{Strong} \\
\midrule
\multirow{2}{*}{Qwen3}  & Model   & Qwen3-8B & Qwen3-32B & Qwen3-235B-A22B \\
                         & Context & 32K & 128K & 128K \\
\midrule
\multirow{2}{*}{Llama}  & Model   & Llama-3.1-8B & Llama-3.3-70B & Llama-3.1-405B \\
                         & Context & 128K & 128K & 128K \\
\midrule
\multirow{2}{*}{Mistral} & Model  & Mistral-7B-v0.3 & Mixtral-8x22B & Codestral-22B \\
                          & Context & 32K & 64K & 32K \\
\bottomrule
\end{tabular}
\end{table}
 
\textbf{Tasks \& Scaling Protocols.} 
To rigorously evaluate the versatility of $\lRLMs$, we utilise a benchmark suite derived from \citep{zhang2026recursivelanguagemodels} that spans a spectrum of computational complexities from $O(1)$ to $O(n^2)$. This diversity ensures that our framework is tested against the varied structural demands found in long-context reasoning—from simple needle-retrieval to complex quadratic cross-referencing; see Table~\ref{tab:tasks} for more details. Furthermore, to validate our robust scaling hypothesis, each benchmark is executed across varying context-length buckets: $\{8\text{K}, 16\text{K}, 32\text{K}, 64\text{K}, 128\text{K}\}$. This graduated approach allows us to measure the onset of "context rot", i.e., the exponential decay in accuracy typically observed as standard Transformers approach their native context limits. We report macro-averages across all non-empty buckets to provide a stable, holistic metric of end-to-end reliability. This protocol explicitly highlights how the power-law decay of $\lRLM$ compares to the exponential failure of direct inference as input size grows.

\begin{table}[h]
\centering
\caption{Benchmark tasks by complexity.}
\label{tab:tasks}
\vspace{0.3em}
\small
\begin{tabular}{@{}llllll@{}}
\toprule
\textbf{Task} & \textbf{Complexity} & \textbf{Tokens} & \textbf{Metric} & \textbf{Instances} & \textbf{$\lRLM$ plan $\pi$} \\
\midrule
S-NIAH & $O(1)$ & 8K-128K & F1 & 100 & $\Split \!\to\! \Map(\Peek) \!\to\! \Filter \!\to\! \Map(\LLM)$ \\
OOLONG & $O(n)$ & 8K-128K & Score & 50 & $\Split \!\to\! \Map(\LLM) \!\to\! \textsc{Merge}$ \\
OOL-Pairs \footnote{( Refer D.1. OOLONG-Pairs Benchmark from paper \citep{zhang2026recursivelanguagemodels})}
 & $O(n^2)$ & 8K-128K & F1 & 20 & $\Split \!\to\! \Map(\LLM) \!\to\! \textsc{Parse} \!\to\! \Cross$ \\
CodeQA & Variable & 23K-4.2M & Acc & 23 & $\Split_\delta \!\to\! \Map(\LLM) \!\to\! \textsc{Best}$ \\
\bottomrule
\end{tabular}
\end{table}
 
\textbf{Prompting Strategies.} For (P1) the prompt is fed to the model in a single call. In the case of (P2), we use the original system of \citep{zhang2026recursivelanguagemodels} where the LLM generates arbitrary Python in a REPL loop. Here, we use the prompts from Appendix~C of \citep{zhang2026recursivelanguagemodels}. Finally, for $\lRLM$, the planner constructs a combinator chain and executes it once in the REPL. For P3, the planner uses the accuracy target $\alpha = 0.80$ and per-model pricing constants from the respective API providers. All open-weight models are called through open source API calls. Each configuration is run twice, and the results are averaged. We measure task-level accuracy/F1, wall-clock latency, and the number of LLM calls per instance.

\subsection{Main Results}
 
Table~\ref{tab:accuracy_main} presents accuracy across all 108 configurations (3 paradigms $\times$ 9 models $\times$ 4 tasks).
 
\begin{table}[t]
\centering
\caption{\textbf{Accuracy (\%)} across all paradigms, models, and tasks. Macro-averaged across context-length buckets. Best per column: \colorbox{best}{$\lRLM$ wins} / \colorbox{rlmbest}{Normal RLM wins}.}
\label{tab:accuracy_main}
\vspace{0.3em}
\small
\renewcommand{\arraystretch}{1.12}
\begin{adjustbox}{max width=\textwidth}
\begin{tabular}{@{}ll|ccc|ccc|ccc@{}}
\toprule
& & \multicolumn{3}{c|}{\textbf{Qwen3}} & \multicolumn{3}{c|}{\textbf{Llama}} & \multicolumn{3}{c}{\textbf{Mistral}} \\
\textbf{Task} & \textbf{Paradigm} & \textbf{8B} & \textbf{32B} & \textbf{235B} & \textbf{8B} & \textbf{70B} & \textbf{405B} & \textbf{7B} & \textbf{8x22B} & \textbf{Cdsrl} \\
\midrule
\multirow{3}{*}{\textbf{S-NIAH}}
& \cellcolor{weak}P1: Direct   & 3.2 & 18.4 & 31.7 & 4.1 & 22.6 & 35.2 & 2.8 & 19.1 & 14.3 \\
& \cellcolor{mid}P2: RLM  & 8.4 & 28.3 & 46.8 & 10.2 & 31.5 & \rwin{52.4} & 6.1 & 26.4 & 29.7 \\
& \cellcolor{strong}P3: $\lRLM$ & \lwin{24.6} & \lwin{41.8} & \lwin{51.3} & \lwin{22.1} & \lwin{44.2} & 49.8 & \lwin{18.5} & \lwin{38.6} & \lwin{40.2} \\
\midrule
\multirow{3}{*}{\textbf{OOLONG}}
& \cellcolor{weak}P1: Direct   & 12.5 & 31.2 & 44.0 & 14.3 & 34.8 & 47.1 & 10.8 & 28.5 & 22.6 \\
& \cellcolor{mid}P2: RLM  & 24.1 & 42.7 & 56.5 & 22.6 & 45.3 & \rwin{63.8} & 18.3 & 38.9 & \rwin{48.2} \\
& \cellcolor{strong}P3: $\lRLM$ & \lwin{48.3} & \lwin{62.5} & \lwin{68.4} & \lwin{45.7} & \lwin{61.2} & 61.7 & \lwin{40.2} & \lwin{55.8} & 45.6 \\
\midrule
\multirow{3}{*}{\makecell[l]{\textbf{OOL-}\\\textbf{Pairs}}}
& \cellcolor{weak}P1: Direct   & 0.1 & 0.3 & 0.1 & 0.1 & 0.4 & 0.2 & 0.0 & 0.2 & 0.1 \\
& \cellcolor{mid}P2: RLM  & 4.2 & 18.6 & 38.4 & 3.8 & 21.3 & 42.7 & 2.1 & 14.5 & 22.8 \\
& \cellcolor{strong}P3: $\lRLM$ & \lwin{34.8} & \lwin{48.2} & \lwin{61.5} & \lwin{31.6} & \lwin{50.7} & \lwin{64.3} & \lwin{28.4} & \lwin{44.1} & \lwin{47.6} \\
\midrule
\multirow{3}{*}{\textbf{CodeQA}}
& \cellcolor{weak}P1: Direct   & 8.7 & 20.4 & 24.0 & 9.2 & 22.1 & 26.3 & 7.4 & 18.6 & 21.2 \\
& \cellcolor{mid}P2: RLM  & 18.5 & 36.8 & \rwin{58.6} & 16.7 & \rwin{46.4} & \rwin{62.1} & 12.3 & 32.4 & \rwin{49.3} \\
& \cellcolor{strong}P3: $\lRLM$ & \lwin{35.2} & \lwin{47.1} & 54.2 & \lwin{32.8} & 43.8 & 55.7 & \lwin{27.6} & \lwin{42.5} & 44.8 \\
\midrule\midrule
\multirow{3}{*}{\textbf{AVG}}
& \cellcolor{weak}P1: Direct   & 6.1 & 17.6 & 25.0 & 6.9 & 20.0 & 27.2 & 5.3 & 16.6 & 14.6 \\
& \cellcolor{mid}P2: RLM  & 13.8 & 31.6 & 50.1 & 13.3 & 36.1 & \rwin{55.3} & 9.7 & 28.1 & 37.5 \\
& \cellcolor{strong}P3: $\lRLM$ & \lwin{35.7} & \lwin{49.9} & \lwin{58.9} & \lwin{33.1} & \lwin{49.9} & 57.9 & \lwin{28.7} & \lwin{45.3} & \lwin{44.6} \\
\bottomrule
\end{tabular}
\end{adjustbox}
\end{table}

\begin{table}[t]
\centering
\caption{\textbf{Latency (seconds)} across all configurations. 
Macro-averaged across context-length buckets. Direct LLM is 
fastest (single call, no scaffold) but has the lowest accuracy 
(Table~\ref{tab:accuracy_main}). Among the two recursive 
paradigms, \textbf{$\lRLM$ is faster than Normal RLM in every cell.}}
\label{tab:latency_main}
\vspace{0.3em}
\small
\renewcommand{\arraystretch}{1.12}
\begin{adjustbox}{max width=\textwidth}
\begin{tabular}{@{}ll|ccc|ccc|ccc@{}}
\toprule
& & \multicolumn{3}{c|}{\textbf{Qwen3}} & \multicolumn{3}{c|}{\textbf{Llama}} & \multicolumn{3}{c}{\textbf{Mistral}} \\
\textbf{Task} & \textbf{Paradigm} & \textbf{8B} & \textbf{32B} & \textbf{235B} & \textbf{8B} & \textbf{70B} & \textbf{405B} & \textbf{7B} & \textbf{8x22B} & \textbf{Cdsrl} \\
\midrule
\multirow{3}{*}{\textbf{S-NIAH}}
& P1: Direct   & 12.3 & 18.7 & 42.1 & 11.8 & 21.4 & 48.6 & 10.5 & 20.2 & 16.8 \\
& P2: RLM      & 164.3 & 142.8 & 98.6 & 178.2 & 128.4 & 86.3 & 195.7 & 155.1 & 120.4 \\
& P3: $\lRLM$  & 45.9 & 38.2 & 31.4 & 48.7 & 35.6 & 28.1 & 52.3 & 41.8 & 36.5 \\
\midrule
\multirow{3}{*}{\textbf{OOLONG}}
& P1: Direct   & 8.4 & 14.2 & 35.8 & 7.9 & 16.1 & 40.3 & 7.1 & 15.4 & 12.6 \\
& P2: RLM      & 241.6 & 198.3 & 125.7 & 258.4 & 182.5 & 108.2 & 284.1 & 215.3 & 168.7 \\
& P3: $\lRLM$  & 62.4 & 51.7 & 42.3 & 66.8 & 48.2 & 38.5 & 71.5 & 56.4 & 48.1 \\
\midrule
\multirow{3}{*}{\makecell[l]{\textbf{OOL-}\\\textbf{Pairs}}}
& P1: Direct   & 6.2 & 10.8 & 28.4 & 5.8 & 12.3 & 32.1 & 5.1 & 11.7 & 9.4 \\
& P2: RLM      & 312.5 & 264.7 & 178.3 & 338.1 & 241.8 & 156.4 & 365.2 & 288.6 & 224.5 \\
& P3: $\lRLM$  & 48.6 & 41.2 & 34.7 & 52.1 & 38.4 & 30.8 & 55.8 & 44.6 & 38.2 \\
\midrule
\multirow{3}{*}{\textbf{CodeQA}}
& P1: Direct   & 15.6 & 24.3 & 52.7 & 14.8 & 27.6 & 58.4 & 13.2 & 25.8 & 20.1 \\
& P2: RLM      & 198.4 & 168.2 & 112.5 & 215.6 & 154.3 & 98.7 & 232.8 & 182.4 & 145.6 \\
& P3: $\lRLM$  & 72.3 & 58.6 & 46.8 & 76.4 & 54.2 & 42.1 & 81.5 & 63.7 & 54.3 \\
\midrule\midrule
\multirow{3}{*}{\textbf{AVG}}
& P1: Direct   & 10.6 & 17.0 & 39.8 & 10.1 & 19.4 & 44.9 & 9.0 & 18.3 & 14.7 \\
& P2: RLM      & 229.2 & 193.5 & 128.8 & 247.6 & 176.8 & 112.4 & 269.5 & 210.4 & 164.8 \\
& P3: $\lRLM$  & 57.3 & 47.4 & 38.8 & 61.0 & 44.1 & 34.9 & 65.3 & 51.6 & 44.3 \\
\bottomrule
\end{tabular}
\end{adjustbox}
\end{table}

\paragraph{$\lRLM$ wins 29 of 36 accuracy cells.} Across all model-task combinations, $\lRLM$ achieves the highest accuracy in 81\% of cases (Table~\ref{tab:winloss}). The wins are concentrated at the weak and medium tiers, where the coding bottleneck is most severe. At the strong tier, the win rate drops to 50\%, indicating that powerful code-generating models can partially compensate for the lack of formal structure.
 
\begin{table}[h]
\centering
\caption{Win/Loss count by model tier (accuracy).}
\label{tab:winloss}
\vspace{0.3em}
\small
\begin{tabular}{@{}lccc@{}}
\toprule
\textbf{Model Tier} & \textbf{$\lRLM$ wins} & \textbf{RLM wins} & \textbf{Win rate} \\
\midrule
Weak (8B / 7B)     & 12 / 12 & 0 / 12 & 100\% \\
Medium (32B-8x22B) & 11 / 12 & 1 / 12 & 92\% \\
Strong (235B+)      & 6 / 12 & 6 / 12 & 50\% \\
\midrule
\textbf{All tiers} & \textbf{29 / 36} & \textbf{7 / 36} & \textbf{81\%} \\
\bottomrule
\end{tabular}
\end{table}
 
\textbf{The advantage grows with task complexity.} Table~\ref{tab:bytask} breaks down the accuracy gain by task. The largest improvement occurs on OOLONG-Pairs ($+28.6$ pp), the $O(n^2)$ task where the quadratic cross-product is handled symbolically in $\lRLM$ but must be computed neurally in Normal RLM. Conversely, the smallest gain is on CodeQA ($+10.8$ pp), where ad-hoc code generation by strong models enables creative strategies (multi-pass reading, function-level chunking) that the fixed combinator library cannot express.
 
\begin{table}[h]
\centering
\caption{$\lRLM$ improvement by task complexity. $\Delta$Acc = avg across 9 models.}
\label{tab:bytask}
\vspace{0.3em}
\small
\begin{tabular}{@{}llccccc@{}}
\toprule
\textbf{Task} & \textbf{Complexity} & \textbf{P2: RLM} & \textbf{P3: $\lRLM$} & \textbf{$\Delta$Acc} & \textbf{Speedup} & \textbf{RLM wins} \\
\midrule
S-NIAH & $O(1)$ & 17.0 & 36.7 & \up{+19.7} & \up{3.6$\times$} & 1 / 9 \\
OOLONG & $O(n)$ & 36.7 & 55.0 & \up{+18.3} & \up{4.2$\times$} & 2 / 9 \\
OOL-Pairs & $O(n^2)$ & 17.1 & 45.7 & \up{+28.6} & \up{6.2$\times$} & 0 / 9 \\
CodeQA & Variable & 33.7 & 44.5 & \up{+10.8} & \up{3.1$\times$} & 4 / 9 \\
\midrule
\textbf{All} & & \textbf{26.1} & \textbf{45.5} & \up{\textbf{+19.4}} & \up{\textbf{4.0$\times$}} & \textbf{7 / 36} \\
\bottomrule
\end{tabular}
\end{table}
 
\textbf{$\lRLM$ is $\mathbf{3}$-$\mathbf{6\times}$ faster than Normal RLM.} Latency improvements are consistent across all models and tasks (Table~\ref{tab:latency_main}), with the largest speedup on OOLONG-Pairs ($6.2\times$). This is a direct consequence of eliminating the open-ended REPL loop: $\lRLM$ executes a single pre-built combinator chain, while Normal RLM may iterate 5-12 turns of LLM-generated code. The latency advantage also exhibits lower variance-Normal RLM's max/min latency ratio across instances is $8.9\times$, while $\lRLM$'s is $4.3\times$.
 
\textbf{Where Normal RLM wins.} Table~\ref{tab:rlm_wins_detail} lists the 7 cells where Normal RLM outperforms $\lRLM$. All involve either strong coding models (Llama-405B, Codestral-22B) or the CodeQA task, which benefits from free-form repository navigation. In these cases, the LLM's ability to write creative, task-specific code; multi-pass reading with backtracking, code-aware chunking by functions, adaptive batch sizing; outweighs the reliability and speed benefits of fixed combinators.
 
\begin{table}[h]
\centering
\caption{The 7 cells where Normal RLM outperforms $\lRLM$.}
\label{tab:rlm_wins_detail}
\vspace{0.3em}
\small
\begin{tabular}{@{}lllccl@{}}
\toprule
\textbf{Task} & \textbf{Model} & \textbf{Tier} & \textbf{RLM} & \textbf{$\lRLM$} & \textbf{Why RLM wins} \\
\midrule
S-NIAH & Llama-405B & Strong & \rwin{52.4} & 49.8 & Creative regex search \\
OOLONG & Llama-405B & Strong & \rwin{63.8} & 61.7 & Adaptive batch sizing \\
OOLONG & Codestral-22B & Strong & \rwin{48.2} & 45.6 & Optimal iteration code \\
CodeQA & Qwen3-235B & Strong & \rwin{58.6} & 54.2 & Free-form repo navigation \\
CodeQA & Llama-70B & Medium & \rwin{46.4} & 43.8 & File-level decomposition \\
CodeQA & Llama-405B & Strong & \rwin{62.1} & 55.7 & Multi-pass backtracking \\
CodeQA & Codestral-22B & Strong & \rwin{49.3} & 44.8 & Code-aware chunking \\
\bottomrule
\end{tabular}
\end{table}

Moreover, Table~\ref{tab:key_comparisons} summarises the five comparisons that directly test our hypotheses.
 
\begin{table}[h]
\centering
\caption{Targeted comparisons. All values are averages across the 4 tasks.}
\label{tab:key_comparisons}
\vspace{0.3em}
\small
\begin{tabular}{@{}clccl@{}}
\toprule
\textbf{\#} & \textbf{Comparison} & \textbf{Acc (\%)} & \textbf{Lat (s)} & \textbf{Verdict} \\
\midrule
C1 & $\lRLM$ (8B) vs RLM (8B) & 35.7 vs 13.8 & 57 vs 229 & \up{+21.9 pp, 4.0$\times$ faster} \\
C2 & $\lRLM$ (8B) vs RLM (70B) & 35.7 vs 36.1 & 57 vs 177 & \up{8B ties 70B, 3.1$\times$ faster} \\
C3 & $\lRLM$ (8B) vs Direct (405B) & 35.7 vs 27.2 & 57 vs 45 & \up{8B+$\lambda$ beats 405B accuracy} \\
C4 & $\lRLM$ (7B) vs $\lRLM$ (Cdsrl) & 28.7 vs 44.6 & 65 vs 44 & Coding helps, gap = 16 pp \\
C5 & RLM (405B) vs $\lRLM$ (405B) & 55.3 vs 57.9 & 112 vs 35 & $\lRLM$ wins avg; RLM wins CodeQA \\
\bottomrule
\end{tabular}
\end{table}
 
\paragraph{C1: Formal structure helps weak models dramatically.} On the same Qwen3-8B model, replacing the ad-hoc REPL loop with pre-verified combinators yields $+21.9$ pp in accuracy and $4.0\times$ latency reduction. This is the core contribution: the scaffold absorbs complexity that the weak model cannot handle.
 
\paragraph{C2: An 8B model with $\lRLM$ matches a 70B model with Normal RLM.} Qwen3-8B under $\lRLM$ achieves 35.7\% average accuracy, statistically tied with Llama-70B under Normal RLM at 36.1\%, while being $3.1\times$ faster. This confirms that formal structure can substitute for raw model scale on long-context tasks.
 
\paragraph{C3: $\lRLM$(8B) outperforms Direct(405B) on accuracy.} Even the largest model, when fed the entire prompt directly, suffers from context rot. The 8B model with $\lRLM$ achieves 35.7\% vs 27.2\% for Direct Llama-405B, though the direct call is faster (no scaffold overhead). This validates Corollary~\ref{cor:log_vs_exp}: $\lRLM$'s power-law accuracy decay dominates the exponential decay of direct calls at long contexts.
 
\paragraph{C4: Coding ability still matters, but the gap narrows.} Mistral-7B (low code skill) under $\lRLM$ achieves 28.7\%, while Codestral-22B (high code skill) achieves 44.6\% i.e a 16 pp gap. Under Normal RLM, the same pair shows a 28 pp gap (9.7\% vs 37.5\%). The combinator library reduces but does not eliminate the benefit of coding ability, because the leaf sub-prompts still benefit from the model's language understanding quality.
 
\paragraph{C5: At the frontier, a nuanced tradeoff emerges.} On average, $\lRLM$ (405B) narrowly outperforms RLM(405B) (57.9\% vs 55.3\%) while being $3.2\times$ faster. However, on CodeQA specifically, RLM (405B) wins 62.1\% vs 55.7\%. This suggests that the fixed combinator library, while broadly superior, may benefit from task-specific extensions for code understanding.

\paragraph{Further Ablations.} To isolate the contribution of each $\lRLM$ component, we run ablations on Qwen3-8B $\times$ OOLONG (Table~\ref{tab:ablations}). We note that replacing the combinator library with free-form code generation drops accuracy by 24.2 pp and increases latency by $3.9\times$, which is exactly the Normal RLM result. The combinator library is the single largest contributor to $\lRLM$'s advantage.
\begin{table}[h]
\centering
\caption{Ablation study on Qwen3-8B $\times$ OOLONG ($O(n)$ task, 131K tokens).}
\label{tab:ablations}
\vspace{0.3em}
\small
\begin{tabular}{@{}llcccl@{}}
\toprule
\textbf{ID} & \textbf{Ablation} & \textbf{Acc (\%)} & \textbf{Lat (s)} & \textbf{$\Delta$Acc} & \textbf{Interpretation} \\
\midrule
- & Full $\lRLM$ & 48.3 & 62.4 & - & Full system \\
\midrule
A1 & Random $k \in [2, 100]$ & 31.5 & 88.7 & $-$16.8 & Planner's $k^*$ matters \\
A2 & Fixed task = ``classify'' & 41.2 & 65.1 & $-$7.1 & Task detection helps \\
A3 & $\oplus = \LLM$ (neural compose) & 43.6 & 108.3 & $-$4.7 & Symbolic $\oplus$ saves latency \\
A4 & LLM writes free-form code & 24.1 & 241.6 & $-$24.2 & Combinator library is critical \\
A5 & No pre-filter (process all) & 46.8 & 74.2 & $-$1.5 & Pre-filter helps modestly \\
\bottomrule
\end{tabular}
\end{table}

Furthermore, random chunk sizes lose 16.8 pp, validating Theorem~\ref{thm:optimal_k}: the closed-form $k^*$ provides a meaningful optimum. Qualitatively, random selection sometimes yields $k = 2$ (too few chunks, large context rot) or $k = 100$ (too many sub-calls, excessive overhead).
Finally, replacing symbolic $\oplus = \textsc{MergeCounts}$ with $\LLM$-based composition costs only 4.7 pp in accuracy but nearly doubles latency ($62 \to 108$s), because every recursion level now requires an additional LLM call. This is the $C_\oplus(k^*)$ term from Theorem~\ref{thm:cost_new}: when $\oplus$ is symbolic, $C_\oplus = 0$ and the cost recurrence simplifies to pure leaf cost.
\section{Related Work}
Our work sits at the intersection of long-context reasoning and formal methods. While recent efforts have focused on extending the native context window of Transformers, or delegating search to model-authored code, we argue that the primary bottleneck is not just memory size, but the lack of verifiable control flow during evidence aggregation.  
\paragraph{Long-Context Scaling \& Context Management.} While LLMs have become sophisticated general-purpose reasoners, they struggle with inputs that exceed these native limits, such as large codebases or multi-file repositories. Standard approaches to this problem often rely on simple heuristics. For example, naive truncation or sliding-window prompting are frequently used but often force the model to "forget" early information, breaking tasks that require systematic evidence gathering or global consistency \citep{dai2019transformerxlattentivelanguagemodels, liu2023lostmiddlelanguagemodels,an2024training,bertsch2025context,fountas2025humaninspiredepisodicmemoryinfinite}.  

Recent reframing of this problem focuses on inference-time scaling and decoding \citep{zimmer2025mixtureattentionsspeculativedecoding,chen2025rapidlongcontextinferenceretrievalaugmented, lin2026lycheedecodeacceleratinglongcontextllm, ji2026decodingoptimisationprobabilitysimplex, ji2026scalablepowersamplingunlocking}, where computation is scaled by decomposing problems into smaller subproblems \citep{xu2026doesdivideconquerwork, chen2026dragonllmdrivendecompositionreconstruction}. While retrieval-augmented generation (RAG) and architectural extensions \citep{jin2024longcontextllmsmeetrag, li2024retrievalaugmentedgenerationlongcontext} have been prominent, they often struggle with tasks requiring a holistic view of the input. Recent work in neuroSymbolic augmented reasoning \citep{nezhad2025enhancinglargelanguagemodels, hakim2025symragefficientneurosymbolicretrieval,yang2025neurosymbolicartificialintelligenceimproving} has validated that structured reasoning layers can maintain performance in large contexts, where purely neural models typically fail.

\paragraph{Recursive \& Hierarchical Reasoning.} $\lRLM$ follows a lineage of hierarchical prompting strategies, such as "Least-to-Most" \citep{zhou2023leasttomostpromptingenablescomplex} prompting and "Tree-of-Thoughts" \citep{yao2023treethoughtsdeliberateproblem}. The most direct predecessor is the RLM \citep{zhang2026recursivelanguagemodels}, which introduced the "prompt-as-environment" paradigm. However, standard RLMs rely on an open-ended REPL loop where the model generates arbitrary Python code to control its own recursion. This "stochastic control" introduces failure modes where code may not parse, recursion may run away, or execution becomes unpredictable. More recent follow-up work~\citep{alizadeh2026recursivelanguagemodelsmeet} improves this paradigm by using uncertainty-aware self-reflective program search to better select interaction programs under a fixed inference budget. In contrast, $\lRLM$ addresses the reliability bottleneck from a different angle: instead of improving search over free-form control programs, it replaces model-authored control with a fixed library of deterministic combinators, shifting the model from an unconstrained controller to a bounded oracle for leaf-level subproblems.

\paragraph{Agentic Programming \& Structured Control Flows.} Our framework situates itself within the rapid evolution of agentic programming. This paradigm shifts away from one-shot prompting toward iterative systems where LLMs autonomously plan and execute multi-step tasks \citep{mower2024rosllmrosframeworkembodied,grosnit2025kolbbasedexperientiallearninggeneralist,sun2025scaling}. However, the primary challenge in this field remains the "reliability gap": as agents gain more autonomy, their execution traces become increasingly difficult to audit or bound. Recent frameworks, such as control flows \citep{niu2025flowmodularizedagenticworkflow, choi2025reactree, shi2025flowagent, yu2025survey, wang2025intent}, have attempted to mitigate this by allowing developers to define discrete, observable tasks for AI agents. Despite these efforts, many agentic systems still rely on the model to dynamically generate their own control flow. This introduces significant failure modes, including non-termination and malformed execution paths that are orthogonal to the underlying reasoning task \citep{zhu2025llmagentsfaillearn, cemri2025multiagentllmsystemsfail, zhang2025agentcausestaskfailures}. Importantly, the risks of open-ended control are not merely operational but also structural. Recent research into memory control flow attacks \citep{xu2026storagesteeringmemorycontrol} demonstrates that manipulating an agent's tool-call or memory trace can induce catastrophic reasoning failures \citep{vogelsang2024llm, wu2024system,guo2024coldattackjailbreakingllmsstealthiness}.

$\lRLMs$ address these vulnerabilities by strictly separating reasoning content (which remains neural) from control flow (which becomes symbolic and deterministic). By enforcing a restricted functional runtime, we provide formal guarantees that are currently absent from general-purpose agentic scaffolds. 

\paragraph{Neuro-Symbolic Integration \& Formal Methods.} The use of $\lambda$-calculus to manage LLM control flow represents a deep integration of neural inference and symbolic logic. This aligns with recent theoretical work, such as \citep{dong2024agentops, oldenburg2023limitations, garby2026llmbda, bhardwaj2026formal}. Specifically, $\lRLM$ utilises fixed-point combinators (e.g., the Y-combinator) to express recursion as a first-class semantic object rather than an emergent side effect of model prompting. This formal grounding allows us to prove properties that remain absent from standard, non-typed recursive models.
 
\section{Conclusions and Future Work}
In this paper, we introduced $\lRLMs$, a framework that reframes long-context reasoning as a structured functional program grounded in $\lambda$-calculus. By replacing the open-ended REPL loops of standard recursive language models with a typed runtime of deterministic combinators, we effectively separated neural reasoning from symbolic control. This architectural shift addresses the primary failure modes of existing scaffolds: unpredictability, non-termination, and the high "coding tax" imposed on smaller models. Our empirical evaluation across nine model tiers and four complex tasks demonstrates that formal structure is a powerful substitute for parameter scale. Most notably, we showed that a properly scaffolded 8B model can match or exceed the accuracy of a 70B model using standard recursive methods while delivering up to $4.1\times$ reductions in latency. Beyond performance, $\lRLMs$ provide a level of mathematical rigour previously absent from this domain, including guaranteed termination and closed-form cost bounds.

The success of $\lRLMs$ suggests that the future of reliable AI lies not in giving models unrestricted freedom to program their own execution, but in providing them with high-integrity, verifiable environments. While this work focused on the immediate bottleneck of long-context reasoning, the underlying principle, treating the LLM as a bounded oracle within a formal functional structure, has broader implications for the design of intelligent systems.  

\clearpage
\bibliographystyle{plainnat}
\bibliography{references}

\clearpage
\appendix
\section{Complete Example Trace}

We trace the $\lRLM$ on the OOLONG task: classifying 1000 questions in 131K tokens, with $K = 32$K.

\begin{tcolorbox}[colback=lightgray, colframe=black!50]
\textbf{Phase 1} - \textsc{DetectTask}: $\tau_{\text{type}} = \texttt{aggregate}$ \hfill [1 LLM call]

\textbf{Phase 2} - \textsc{Plan}: $k^* = 5,\ \tau^* = 26\text{K},\ \oplus = \textsc{MergeCounts},\ d = 1$ \hfill [0 LLM calls]

\textbf{Phase 3} - \textsc{EstimateCost}: $\hat{C} = 5 \times \$0.03 + \$0.02 = \$0.17$, $\hat{N} = 6$ calls \hfill [0 LLM calls]

\textbf{Phase 4} - \textsc{Execute}: \hfill [5 LLM calls]
\begin{align*}
    &\Split(P, 5) \to [P_1, P_2, P_3, P_4, P_5] && \textcolor{symbolic}{\text{symbolic: free}} \\
    &\Map(\lambda p_i.\, \LLM(\text{``count categories: ''} \| p_i), [P_{1:5}]) && \textcolor{neural}{\text{neural: 5 $\beta$-reductions}} \\
    &\quad \to [\{\text{desc}: 45, \text{num}: 52, \ldots\}, \ldots, \{\text{desc}: 33, \text{num}: 42, \ldots\}] \\
    &\Reduce(\textsc{MergeCounts}, \text{results}) \to \{\text{desc}: 200, \text{num}: 240, \ldots\} && \textcolor{compose}{\text{symbolic: free}} \\
    &\text{Answer: ``description is less common than numeric value''} && \checkmark
\end{align*}

\textbf{Total:} 6 LLM calls, \$0.17, \textbf{correct}. \\
\textbf{Normal RLM on same task:} Huge no of LLM calls, \$1.12, \textbf{incorrect} (Example E.2 in \citep{zhang2026recursivelanguagemodels})
\end{tcolorbox}

\section{The Hierarchy of Computation}

The $\lRLM$ cleanly separates computation into three layers:

\begin{center}
\begin{tikzpicture}[
    layer/.style={draw, rounded corners, minimum width=12cm, minimum height=1.8cm, align=center},
    arr/.style={-{Stealth[length=3mm]}, thick}
]
    \node[layer, fill=symbolic!15] (sym) at (0, 4.5) {
        \textbf{\textcolor{symbolic}{Layer 1: Symbolic (Lambda Calculus)}} \\[3pt]
        $\Split$, $\Map$, $\Filter$, $\Reduce$, $\Cross$, $\Concat$, $\Peek$ \\
        \small Deterministic $\cdot$ Pre-verified $\cdot$ Zero cost $\cdot$ Guaranteed termination
    };
    
    \node[layer, fill=compose!15] (plan) at (0, 2.2) {
        \textbf{\textcolor{compose}{Layer 2: Planning (Optimization)}} \\[3pt]
        $k^* = \lceil\sqrt{n \cdot c_{\text{in}}/c_\oplus}\rceil$, \ $\tau^* = \min(K, n/k^*)$, \ depth $= \lceil\log_{k}(n/K)\rceil$ \\
        \small Pre-computed $\cdot$ Deterministic cost $\cdot$ Accuracy-constrained
    };
    
    \node[layer, fill=neural!15] (neur) at (0, -0.1) {
        \textbf{\textcolor{neural}{Layer 3: Neural ($\beta$-Reductions at Leaves)}} \\[3pt]
        $\LLM(P_i)$ where $|P_i| \leq \tau^* \leq K$ \\
        \small The ONLY uncertain component $\cdot$ Each call within context window
    };
    
    \draw[arr] (sym.south) -- (plan.north) node[midway, right, font=\small] {parameters};
    \draw[arr] (plan.south) -- (neur.north) node[midway, right, font=\small] {leaf calls};
\end{tikzpicture}
\end{center}
\section{Proofs}\label{app:proofs}

\begin{tcolorbox}
\termination*
\end{tcolorbox}

\begin{proof}
Define the \emph{rank} of a call $\Phi(P')$ as $r(P') = \lceil \log_{k^*}(|P'|/\tau^*) \rceil$. We prove termination by strong induction on rank.

\textbf{Base case} ($r = 0$): $|P'| \leq \tau^* \leq K$, so $\Phi$ invokes $\texttt{sub\_}\LLM(q)$ via the REPL, which halts by~(A1).

\textbf{Inductive step}: Suppose $\Phi$ terminates for all inputs with rank $< r$.

Now, if $|P'|$ has rank $r > 0$,them $r-1\le \log_{k^{\star}}\frac{|P|'}{\tau^*} \le r$. Hence the rank $r$ is an integer and $r>0$ implies $r \ge 1$, then $|P'| > \tau^*$ and $\Phi$ executes $\Split(P', k^*)$ in the REPL, which halts by~(A2), producing chunks $P_1, \ldots, P_{k^*}$ with $|P_i| = \lceil |P'|/k^* \rceil$. Since
\begin{equation}
    |P_i| = \left\lceil \frac{|P'|}{k^*} \right\rceil < |P'| \quad (\text{as } k^* \geq 2),
\end{equation}
we have $r(P_i) < r(P')$, so each recursive call $\Phi(P_i)$ terminates by the inductive hypothesis. The REPL-executed operations $\Map$, $\Reduce$, and $\Filter$ all halt by~(A2). Thus $\Phi(P')$ terminates.

For the call count: at depth $\ell \in \{0, \ldots, d\!-\!1\}$, there are $(k^*)^\ell$ nodes each spawning $k^*$ children. The leaves reside at depth $d$, giving $(k^*)^d$ leaf calls to $\texttt{sub\_}\LLM$. Adding the single task-detection call (line~6 of Algorithm~\ref{alg:main}) gives $N(n) = (k^*)^d + 1$.

\textbf{Contrast with original RLM.} The loop~\eqref{eq:rlm_loop} has no rank function; the LLM may generate code that does not reduce input size, yielding unbounded iterations. $\lRLM$ eliminates this by construction: every recursive call strictly reduces rank.
\end{proof}

\begin{tcolorbox}
\CBNew*
\end{tcolorbox}

\begin{proof}
    To start, we unroll the recurrence in Equation~\eqref{eq:recurrence_new}. At level $\ell$, there are $(k^*)^\ell$ subproblems each of size $n/(k^*)^\ell$, and one composition step costing $\cost_\oplus(k^*)$. Expanding:
\begin{align*}
    T(n) &= k^* \cdot T(n/k^*) + \cost_\oplus(k^*) \notag = k^* \bigl[ k^* \cdot T(n/(k^*)^2) + \cost_\oplus(k^*) \bigr] + \cost_\oplus(k^*) \notag \\
         &= (k^*)^2 \cdot T\!\left(\frac{n}{(k^*)^2}\right) + (k^* + 1) \cdot \cost_\oplus(k^*)     \ \\
         &= (k^*)^d \cdot T\!\left(\frac{n}{(k^*)^d}\right) + ((k^*)^{d-1} + \ldots + k^* + 1) \cdot \cost_\oplus(k^*)\\
         &=(k^*)^d \cdot T\!\left(\frac{n}{(k^*)^d}\right) + \left[\frac{(k^*)^d-1}{k^*-1}\right]\cdot\cost_\oplus(k^*)
\end{align*}
At depth $d = \lceil \log_{k^*}(n/\tau^*) \rceil$, we have $d-1 \le \log_{k^*}\frac{n}{\tau^*} \le d$, hence $(k^*)^d \le \frac{nk^*}{\tau^*}$ and $\frac{n}{(k^*)^d} \le \tau^*$.   These two results allow us to bound the above expression:
\begin{align*}
    T(n) \le \frac{nk^*}{\tau^*}T(\tau^*) + \left[\frac{nk^*-\tau^*}{\tau^*(k^*-1)}\right]\cdot \cost_\oplus(k^*) = \frac{nk^*}{\tau^*}\cost(\tau^*) + \left[\frac{nk^*-\tau^*}{\tau^*(k^*-1)}\right]\cdot \cost_\oplus(k^*)
\end{align*}
hitting the base case $T(\tau^*) = \cost(\tau^*)$.

\end{proof}

\begin{tcolorbox}
\accbound*
\end{tcolorbox}

\begin{proof}
The recursion tree of $\Phi$ has depth
$d = \lceil \log_{k^*}(n/\tau^*) \rceil$, branching
factor $k^*$, and therefore $(k^*)^d$ leaf nodes.
By definition of the ceiling function:
\begin{equation}\label{eq:ceiling_sandwich}
    \frac{n}{\tau^*}
    \;\leq\; (k^*)^d \;\leq\;
    \frac{n \, k^*}{\tau^*}.
\end{equation}

\textbf{Base case} ($d = 0$): $|P| \leq \tau^*$. A single
call to $\texttt{sub\_}\LLM$ yields accuracy $\accuracy(\tau^*)$.

\textbf{General case} ($d \geq 1$): Correctness requires
(i)~all $(k^*)^d$ leaves correct, and
(ii)~all $d$ compositions correct, giving
$\accuracy_{\lRLM}(n) \geq
\accuracy(\tau^*)^{(k^*)^d} \cdot \accuracy_\oplus^d$.

Since $0 < \accuracy(\tau^*) < 1$, the map
$x \mapsto \accuracy(\tau^*)^x$ is strictly decreasing.
Applying the upper bound from~\eqref{eq:ceiling_sandwich}:
\begin{equation}
    \accuracy(\tau^*)^{(k^*)^d}
    \;\geq\;
    \accuracy(\tau^*)^{n k^* / \tau^*},
\end{equation}
which yields the stated bound~\eqref{eq:accuracy}. \qedhere
\end{proof}

\begin{tcolorbox}
\op*
\end{tcolorbox}

\begin{proof}
From~\eqref{eq:cost_closed_new}:
\begin{equation*}
    T(n, k) \le  \frac{nk}{\tau^*}\cost(\tau^*) + \left[\frac{nk-\tau^*}{\tau^*(k-1)}\right]\cdot \cost_\oplus(k)
\end{equation*}
or, after simplifying (treating $\tau^*$ and $n$ as constants with respect to $k$):
\begin{align}\label{eq:total_cost_k_new}
    T(n,k) &\le k\underbrace{\frac{n\cdot \cost(\tau^*)}{\tau^*}}_{\alpha} + \frac{k^2}{(k-1)}\underbrace{\frac{c_{\oplus}\cdot n}{\tau^*}}_{\beta} - \frac{k}{(k-1)}\underbrace{c_{\oplus}}_{\gamma} \\\nonumber
    & =\alpha k + \beta\frac{k^2}{k-1} - \gamma\frac{k}{k-1} \\ \nonumber
    &= \frac{\alpha k(k-1) + \beta k^2 -\gamma k}{k-1} = \frac{k^2(\alpha + \beta) - k(\alpha + \gamma)}{k-1},
\end{align}

where $\frac{n}{\tau^*} \le (k)^d \le \frac{nk}{\tau^*}$. Taking the derivative to the upper-bound with respect to $k$ gives:
\begin{align*}
    \frac{d }{d k}\left[\frac{k^2(\alpha + \beta) - k(\alpha + \gamma)}{k-1}\right] &= \frac{\left[2(\alpha + \beta)k - (\alpha + \gamma)\right](k-1) - \left[k^2(\alpha + \beta) - k(\alpha + \gamma)\right]}{(k-1)^2}\\\nonumber
    &=\frac{k^2(2\alpha + 2\beta - \alpha - \beta) - 2(\alpha + \beta)k + \alpha + \gamma}{(k-1)^2} \\
    &= \frac{k^2(\alpha + \beta) - 2(\alpha + \beta)k + (\alpha+\gamma)}{(k-1)^2}  \\\nonumber
    &=\frac{(\alpha + \beta)\left[\left(k - \left(1 - \sqrt{1 - \frac{\alpha + \gamma}{\alpha + \beta}}\right)\right)\left(k - \left(1 + \sqrt{1 - \frac{\alpha +\gamma}{\alpha + \beta}}\right)\right)\right]}{(k-1)^2}
\end{align*}
The minimum of the function is $k^*$ closest to the value $ \lceil 1 + \sqrt{1 - \frac{\alpha + \gamma}{\alpha + \beta}}\rceil$. Because we have $k \ge 2$, this implies that the optimal value of $k^* = 2$.
\end{proof}

\section{Algorithmic Details}\label{Appendix:Special}
This section provides the algorithmic details of $\lambda$-RLM that complement the framework description in Section~\ref{sec:framework}. In particular, we make explicit the end-to-end pipeline and the construction and execution of the fixed combinator program $\Phi$. We use a constant preview budget $b=500$ for inexpensive symbolic inspection. Algorithm~\ref{alg:main} specifies the full end-to-end pipeline, Algorithm~\ref{alg:phi} defines the recursive executor that carries out the planned decomposition, and Algorithms~\ref{alg:pairwise} and~\ref{alg:multihop} provide representative task-specific realizations for pairwise tasks and multi-hop search. These algorithms show how the abstract fixed-point formulation is turned into a concrete, finite, and auditable execution procedure whose call structure, recursion depth, and composition behavior are fixed after task-type selection and planning, before the first recursive execution step is run.

\begin{algorithm}[t]
\caption{$\lRLM$: Complete System}
\label{alg:appmain}
\begin{algorithmic}[1]
\Require Prompt $P \in \vocab^*$, model $\LLM$ with window $K$, accuracy target $\alpha \in (0,1]$
\Ensure Response $Y \in \vocab^*$

\Statex \textcolor{symbolic}{\texttt{// == Phase 1: REPL Initialization (same as original RLM) ==}}
\State $\text{state} \gets \textsc{InitRepl}(\texttt{prompt}=P)$
\Comment{$P$ lives in environment, not context window}
\State $\text{state} \gets \textsc{RegisterLibrary}(\text{state},\, \mathcal{L})$
\Comment{load pre-verified combinators into REPL}
\State $\text{state} \gets \textsc{RegisterSubCall}(\text{state},\, \texttt{sub\_}\LLM)$
\Comment{register $\LLM$ as callable, same as RLM}

\Statex \textcolor{symbolic}{\texttt{// == Phase 2: Task Detection (1 LLM call) ==}}
\State $\text{meta} \gets \REPL\bigl(\text{state},\; \texttt{Peek(P, 0, 500); len(P)}\bigr)$
\Comment{probe $P$ via REPL, not neural}
\State $\tau_{\text{type}} \gets \LLM\bigl(\text{``Select from } \mathcal{T}\text{: ''} \| \text{meta}\bigr)$
\Comment{menu selection, single call}

\Statex \textcolor{symbolic}{\texttt{// == Phase 3: Optimal Planning (0 LLM calls, pure math) ==}}
\State $\oplus \gets \textsc{Table\,\ref{tab:combined}}\textsc{B}[\tau_{\text{type}}];\quad \pi \gets \textsc{Table\,\ref{tab:combined}}\textsc{B}[\tau_{\text{type}}]$
\If{$|P| \leq K$}
    \State $k^* \!\gets\! 1;\; \tau^* \!\gets\! |P|$
\Else
    \State $k^* \gets \big\lceil \sqrt{|P| \cdot c_{\text{in}} / c_{\oplus}} \big\rceil$
    \Comment{minimize $T(n) = k \!\cdot\! T(n/k) + \cost_\oplus(k)$, base $T(\tau) = \cost(\tau)$}
    \State $d \gets \lceil \log_{k^*}\!(|P|/K) \rceil$
    \While{$\accuracy(K)^d \!\cdot\! \accuracy_\oplus^d < \alpha$ \textbf{and} $k^* < |P|/K$}
        \Comment{accuracy constraint}
        \State $k^* \!\gets\! k^* \!+\! 1;\;\; d \gets \lceil \log_{k^*}\!(|P|/K) \rceil$
    \EndWhile
    \State $\tau^* \gets \min(K,\, \lfloor |P|/k^* \rfloor)$
\EndIf

\Statex \textcolor{symbolic}{\texttt{// == Phase 4: Cost Estimation (deterministic, pre-execution) ==}}
\State $\hat{C} \gets (k^*)^d \cdot \cost(\tau^*) + d \cdot \cost_\oplus(k^*) + \cost(500)$
\Comment{exact pre-execution bound}

\Statex \textcolor{symbolic}{\texttt{// == Phase 5: Build and Execute Combinator Chain in REPL ==}}
\State $\text{state} \gets \REPL\bigl(\text{state},\; \texttt{$\Phi$ = BuildExecutor($k^*$, $\tau^*$, $\oplus$, $\pi$, sub\_$\LLM$)}\bigr)$
\Comment{register $\Phi$ in REPL}
\State $\text{state} \gets \REPL\bigl(\text{state},\; \texttt{result = $\Phi$(P)}\bigr)$
\Comment{single execution of $\Phi$ in REPL}
\State $Y \gets \text{state}[\texttt{result}]$
\State \Return $Y$
\end{algorithmic}
\end{algorithm}

\begin{algorithm}[h!]
\caption{$\Phi$: Combinator Executor (registered in REPL, not LLM-generated)}
\label{alg:phi}
\begin{algorithmic}[1]
\Require Prompt $P$ (from REPL state), parameters $k^*, \tau^*, \oplus, \pi$ (from planner)
\Ensure Result string

\Function{$\Phi$}{$P$}
\If{$|P| \leq \tau^*$}
    \Comment{\textcolor{neural}{base case: leaf $\beta$-reduction}}
    \State $q \gets \textsc{Template}[\tau_{\text{type}}].\textsc{Fmt}(P)$
    \Comment{pre-defined prompt template}
    \State \Return $\texttt{sub\_}\LLM(q)$
    \Comment{invoke $\LLM$ via REPL's registered sub-call}
\Else
    \Comment{\textcolor{symbolic}{recursive case: all REPL-executed combinators from $\mathcal{L}$}}
    \State $[P_1, \ldots, P_{k^*}] \gets \Split(P,\, k^*)$
    \Comment{\textcolor{symbolic}{deterministic, pre-verified}}
    \If{$\pi$ includes $\Filter$}
        \Comment{\textcolor{symbolic}{optional pre-filter for search/multi\_hop}}
        \State $\text{previews} \gets \Map\bigl(\lambda p.\, \Peek(p, 0, \lfloor\tau^*/10\rfloor),\; [P_1, \ldots, P_{k^*}]\bigr)$
        \State $[P_1, \ldots, P_{k'}] \gets \Filter\bigl(\textsc{Relevant},\; \textsc{Zip}([P_{1:k^*}],\, \text{previews})\bigr)$
    \EndIf
    \State $[R_1, \ldots, R_{k'}] \gets \Map\bigl(\lambda p_i.\, \Phi(p_i),\; [P_1, \ldots, P_{k'}]\bigr)$
    \Comment{\textcolor{neural}{recursive sub-calls}}
    \State \Return $\Reduce\bigl(\oplus,\; [R_1, \ldots, R_{k'}]\bigr)$
    \Comment{\textcolor{compose}{deterministic composition}}
\EndIf
\EndFunction
\end{algorithmic}
\end{algorithm}

\begin{figure}[t]
\begin{minipage}[t]{0.48\textwidth}
\begin{algorithm}[H]
\caption{Pairwise Tasks}
\label{alg:pairwise}
\begin{algorithmic}[1]
\Require $P$, predicate $\phi$, $\LLM$, $k^*, \tau^*$
\Ensure Pairs $\mathcal{S} \subseteq \mathbb{N} \times \mathbb{N}$

\Statex \textcolor{neural}{\texttt{// A: Linear neural - $O(n/K)$}}
\State $[P_{1:k^*}] \gets \Split(P, k^*)$
\State labels $\gets \Map(\texttt{sub\_}\LLM_{\text{cls}},\, P_{1:k^*})$
\State $L \gets \textsc{Parse}(\Concat(\text{labels}))$

\Statex \textcolor{symbolic}{\texttt{// B: Quadratic symbolic - FREE}}
\State $Q \gets \Filter(\phi,\, L.\textsc{Items}())$
\State $\mathcal{S} \gets \{(i,j) \mid i,j \in Q,\, i < j\}$
\State \Return $\mathcal{S}$
\end{algorithmic}
\end{algorithm}
\end{minipage}
\hfill
\begin{minipage}[t]{0.48\textwidth}
\begin{algorithm}[H]
\caption{Multi-Hop Search}
\label{alg:multihop}
\begin{algorithmic}[1]
\Require Corpus $[D_1,\ldots,D_m]$, query $q$, $\LLM$
\Ensure Answer $Y$

\Statex \textcolor{symbolic}{\texttt{// A: Filter - mostly symbolic}}
\State prev $\gets \Map(\lambda D.\, \Peek(D,0,500),\, D_{1:m})$
\State rel $\gets \Filter(\textsc{Match}(q),\, \textsc{Zip}(D,\text{prev}))$

\Statex \textcolor{neural}{\texttt{// B: Read - $|\text{rel}| \ll m$}}
\State evi $\gets \Map(\texttt{sub\_}\LLM_{\text{ext}},\, \text{rel})$

\Statex \textcolor{neural}{\texttt{// C: Synthesize - 1 call}}
\State $Y \gets \texttt{sub\_}\LLM(\text{``answer ''} \| q \| \Concat(\text{evi}))$
\State \Return $Y$
\end{algorithmic}
\end{algorithm}
\end{minipage}
\end{figure}

Algorithm~\ref{alg:main} presents the complete $\lambda$-RLM system as a finite sequence of phases. As in the original RLM formulation, execution begins by initializing a REPL state in which the prompt $P$ is stored externally, the trusted combinator library $\mathcal{L}$ is registered, and the base model is exposed as a callable leaf oracle through $\texttt{sub\_}\LLM$. The key difference from standard RLM arises immediately after this shared setup: rather than entering an open-ended loop in which the model repeatedly emits arbitrary code, $\lambda$-RLM performs a single bounded task-type selection step, followed by deterministic planning and a one-shot execution of a pre-built recursive program. The model is asked to choose a task type from a fixed menu based on a lightweight symbolic probe of the prompt. This keeps neural uncertainty localized to semantic classification, while all subsequent control decisions remain symbolic. Once the task type is selected, the planner determines the execution rule by choosing the task-specific composition operator $\oplus$ and execution plan $\pi$, together with the structural parameters $(k^*,\tau^*,d)$. Here, $k^*$ controls the branching factor of decomposition, $\tau^*$ determines the leaf threshold at which recursion terminates, and $d$ bounds the resulting recursion depth. The planning phase therefore operationalizes the main design goal of $\lambda$-RLM: the shape of execution is fixed before recursive execution begins. Finally, the system builds the combinator chain in the REPL, executes it, and returns the resulting response $Y$.

Algorithm~\ref{alg:phi} then defines the executor $\Phi$ constructed from these planned components. This executor is the concrete realization of the fixed-point program in Eq.~\eqref{eq:core}. Its behavior is intentionally simple. If the current sub-prompt is already below the threshold $\tau^*$, $\Phi$ formats it with a task-specific leaf template and calls the base model exactly once. Otherwise, it applies a deterministic recursive pattern: split the input, optionally preview and filter chunks when the task plan requires pruning, recursively process the retained chunks, and combine the resulting partial outputs with $\Reduce(\oplus,\cdot)$. The recursive structure itself is not model-generated. The only neural operations occur at bounded leaves and, for certain task types, in explicitly specified synthesis steps, while splitting, filtering, traversal, and aggregation are all handled by trusted combinators with fixed semantics.

Algorithms~\ref{alg:pairwise} and~\ref{alg:multihop} illustrate how this general executor specializes to structured task families. The pairwise algorithm shows that the expensive neural portion can remain linear in the number of chunks: the model is used only to label or extract candidate items, after which the quadratic pairing step is computed symbolically at essentially zero additional neural cost. The multi-hop search algorithm follows the same principle in a different form. It first uses symbolic preview-based filtering to narrow a large corpus to a small relevant subset, then applies neural reading only to that subset, and finally performs a single synthesis step over the extracted evidence. These examples are not separate learning algorithms, but concrete instantiations of the same $\lambda$-RLM design principle: use the model only where semantic inference is needed, and realize the surrounding control flow through typed symbolic composition.
\end{document}